\documentclass{article}

\usepackage{microtype}
\usepackage{graphicx}
\usepackage{subfigure}
\usepackage{booktabs} %

\usepackage{hyperref}

\usepackage[accepted]{icml2025}

\usepackage{amsmath}
\usepackage{amssymb}
\usepackage{mathtools}
\usepackage{amsthm}

\usepackage[capitalize, noabbrev]{cleveref}

\theoremstyle{plain}
\newtheorem{theorem}{Theorem}[section]

\theoremstyle{definition}

\theoremstyle{remark}
\newtheorem{remark}[theorem]{Remark}

\usepackage[textsize=tiny]{todonotes}

\usepackage{multirow}

\usepackage{amsmath,amsfonts,bm}

\def\eqref#1{equation~\ref{#1}}

\def\1{\bm{1}}

\DeclareMathAlphabet{\mathsfit}{\encodingdefault}{\sfdefault}{m}{sl}
\SetMathAlphabet{\mathsfit}{bold}{\encodingdefault}{\sfdefault}{bx}{n}

\DeclareMathOperator*{\argmin}{arg\,min}

\definecolor{lightgray}{gray}{.95}
\definecolor{lightyellow}{rgb}{1.0, 1.0, 0.88}
\definecolor{whitesmoke}{rgb}{0.96, 0.96, 0.96}
\definecolor{lightcyan}{rgb}{0.88, 1.0, 1.0}
\usepackage{thmtools}

\usepackage{siunitx}
\usepackage{titletoc}

\usepackage{hyperref}
\definecolor{cvprblue}{rgb}{0.21,0.49,0.74}
\definecolor{citecolor}{HTML}{2980b9}
\definecolor{linkcolor}{HTML}{c0392b}
\hypersetup{
    colorlinks=true,
    linkcolor=linkcolor,
    citecolor=citecolor,     
    urlcolor=linkcolor,
}

\usepackage{lipsum}
\usepackage{tikz}
\usepackage{multicol}
\usepackage{pifont}

\usepackage[ruled,vlined]{algorithm2e}
\SetKw{KwInit}{Init:}
\usepackage{tcolorbox}
\tcbuselibrary{breakable}

\newtcolorbox{introbox}{colback=gray!10!white, colframe=gray!50!black, boxrule=0.5pt, arc=2pt, left=2pt, right=2pt, top=2pt, bottom=2pt, breakable, width=\linewidth}
\newtcolorbox{initbox}{colback=yellow!10!white, colframe=yellow!50!black, boxrule=0.5pt, arc=2pt, left=2pt, right=2pt, top=2pt, bottom=2pt, breakable, width=\linewidth}
\newtcolorbox{optbox}{colback=blue!10!white, colframe=blue!50!black, boxrule=0.5pt, arc=2pt, left=2pt, right=2pt, top=2pt, bottom=2pt, breakable, width=0.933\linewidth}
\newtcolorbox{updatebox}{colback=green!10!white, colframe=green!50!black, boxrule=0.5pt, arc=2pt, left=2pt, right=2pt, top=2pt, bottom=2pt, breakable, width=0.866\linewidth}

\usepackage{enumitem}

\begin{document}
    \twocolumn[ \icmltitle{Scalable Model Merging with Progressive Layer-wise Distillation}

    \begin{icmlauthorlist}
        \icmlauthor{Jing Xu}{thu,sqz} 
        \icmlauthor{Jiazheng Li}{sqz,bit}
        \icmlauthor{Jingzhao Zhang}{thu,sqz}
    \end{icmlauthorlist}

    \icmlaffiliation{thu}{Institute for Interdisciplinary Information Sciences, Tsinghua University}
    \icmlaffiliation{bit}{School of Computer Science, Beijing Institute of Technology} 
    \icmlaffiliation{sqz}{Shanghai Qizhi Institute}

    \icmlcorrespondingauthor{Jing Xu}{xujing21@mails.tsinghua.edu.cn}
    \icmlcorrespondingauthor{Jiazheng Li}{Foreverlasting1202@outlook.com}
    \icmlcorrespondingauthor{Jingzhao Zhang}{jingzhaoz@mail.tsinghua.edu.cn}

    \icmlkeywords{Machine Learning, ICML}

    \vskip 0.3in ]

    \printAffiliationsAndNotice{}

    \begin{abstract}
Model merging offers an effective way to integrate the capabilities of multiple fine-tuned models. However, the performance degradation of the merged model remains a challenge, particularly when none or few data are available. This paper first highlights the necessity of domain-specific data for model merging by proving that data-agnostic algorithms can have arbitrarily bad worst-case performance. 
Building on this theoretical insight, we explore the relationship between model merging and distillation, introducing a novel few-shot merging algorithm, \texttt{ProDistill}~(\textit{Pro}gressive Layer-wise \textit{Distill}ation). Unlike common belief that layer-wise training hurts performance, we show that layer-wise teacher-student distillation not only enhances the scalability but also improves model merging performance. We conduct extensive experiments to show that compared to existing few-shot merging methods, \texttt{ProDistill} achieves state-of-the-art performance, with up to 6.14\% and 6.61\% improvements in vision and NLU tasks.
Furthermore, we extend the experiments to models with over 10B parameters, showcasing the exceptional scalability of \texttt{ProDistill}.
\end{abstract}
    
\section{Introduction}
\label{sec:intro}

\begin{figure}
    \centering
    \includegraphics[width=1.0\linewidth]{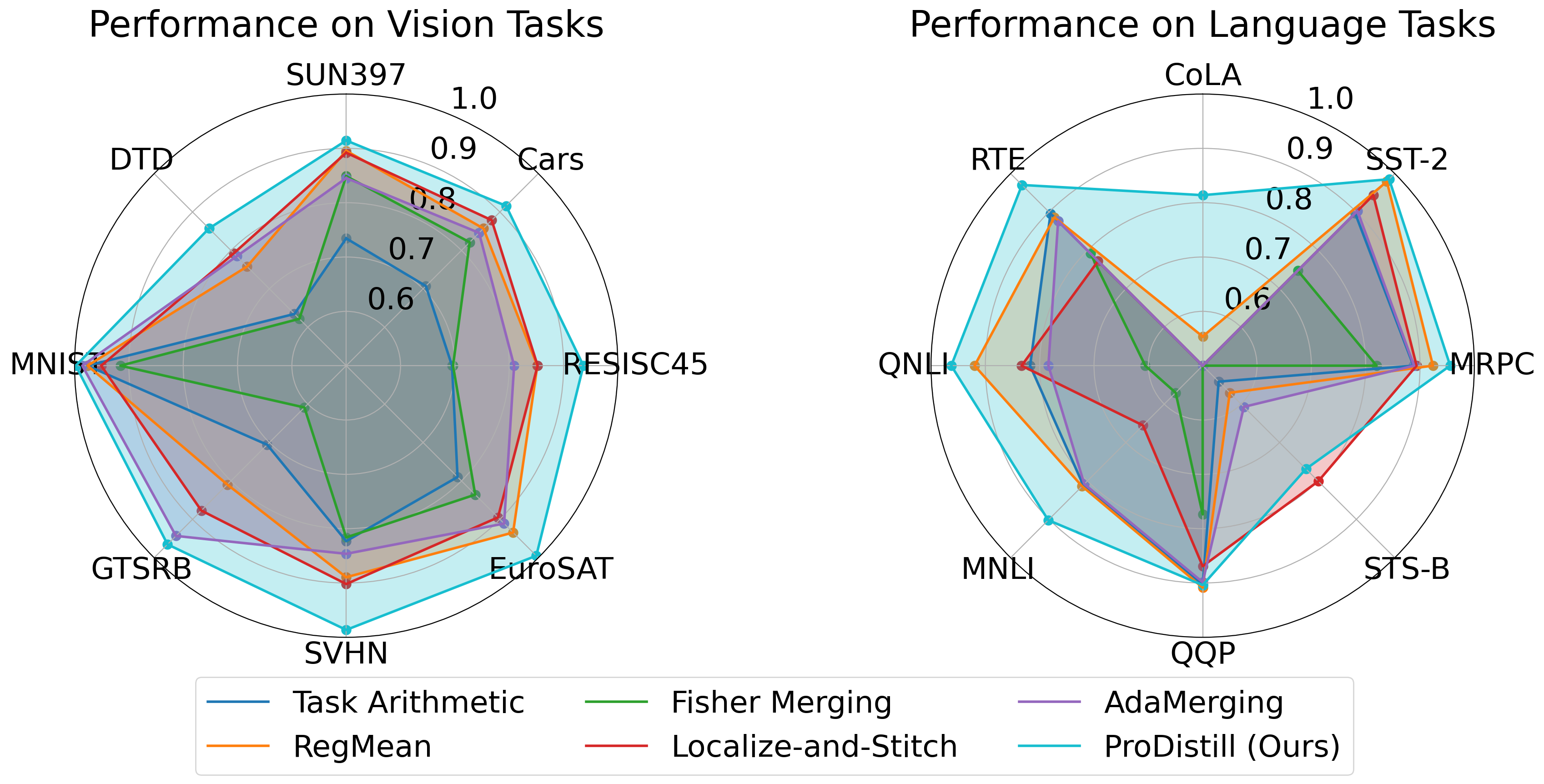}
    \caption{\textbf{\texttt{ProDistill} consistently outperforms other methods across nearly all considered tasks.} The performance metrics for each task are normalized and then clipped at a minimum value of 0.5 for better visualization.}
    \label{fig:enter-label}
\end{figure}

\begin{figure*}
    \centering
    \includegraphics[width=1\linewidth]{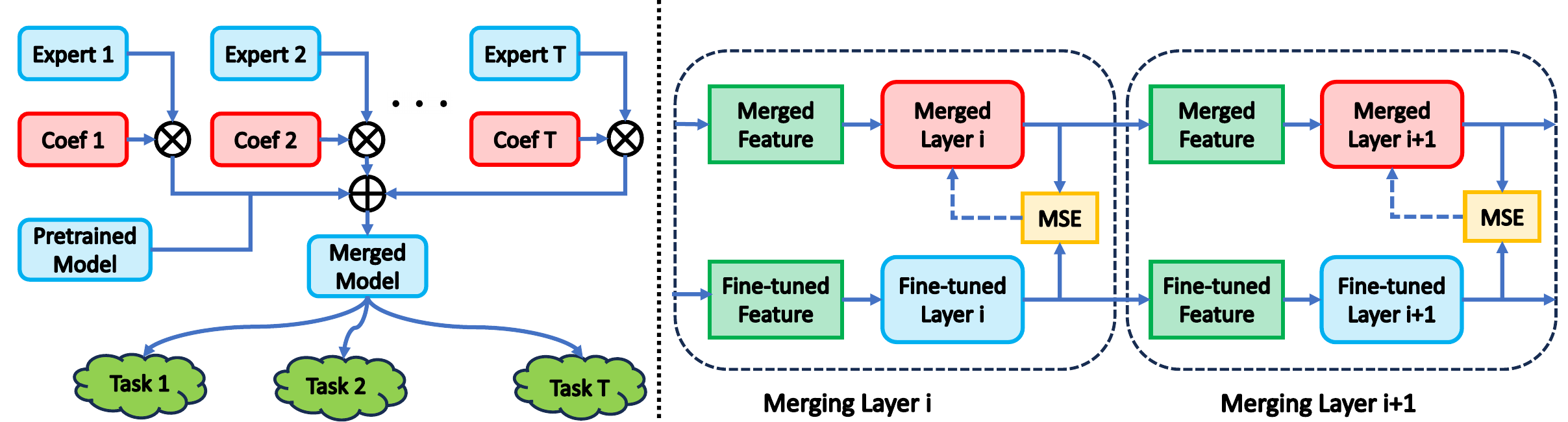}
    \caption{\textbf{Left: Overview of model merging.} Each expert corresponds to a task vector $\boldsymbol{\theta}_i-\boldsymbol{\theta}_0$, which is scaled by its corresponding merging coefficient $\boldsymbol{\lambda}_i$ and summed to get the merged model. \textbf{Right: Illustration of \texttt{ProDistill}.} The merged model layer and each fine-tuned model layer take as input the merged feature and the fine-tuned feature, respectively. The MSE loss between these outputs is used to update the merged model layer. The output features serve as inputs for merging the subsequent layer. }
    \label{fig: overview}
\end{figure*}

Large-scale pre-trained models have revolutionized deep learning, achieving remarkable success across various domains such as language~\citep{brown2020language,team2023gemini, touvron2023llama} and vision~\citep{dosovitskiy2020image, ramesh2021zero}. 
Meanwhile, an increasing number of fine-tuned checkpoints are being made publicly available on platforms like Hugging Face. Depending on the specific downstream datasets, fine-tuned models excel in specialized abilities, such as mathematics or coding. However, complex tasks often require the integration of multiple abilities. For example, solving an advanced math problem may necessitate the assistance of computer programs to produce accurate solutions. While multi-task learning~\citep{caruana1997multitask,misra2016cross,sener2018multi,liu2019end} can address this challenge, it requires access to fine-tuning data and incurs significant computational overhead during retraining. On the other hand, model ensembling~\citep{dietterich2002ensemble,kurutach2018model,ganaie2022ensemble} avoids retraining but introduces substantial storage overhead due to the need to deploy multiple models.

Model merging~\citep{yang2024model,goddard2024arcee,tang2024fusionbench} offers an elegant solution to these challenges. The work of~\citet{ilharco2022editing} finds that the difference between fine-tuned and pre-trained weights, which they name \textit{task vectors}, exhibits arithmetic properties, such as addition and negation, which correspond to changes in model capabilities. Therefore, model merging can be achieved by taking a weighted average of the model weights, as illustrated in Figure~\ref{fig: overview}. This is connected to the linear mode connectivity~\citep{frankle2020linear,mirzadeh2020linear} of neural networks.

Although model merging improves storage efficiency and data protection, the performance of the merged model can degrade, especially when the number of models scales up. Recent studies~\citep{matena2022merging, jin2022dataless,yang2023adamerging} propose various methods to handle this issue,
many of which require a \textit{few-shot validation dataset} that contains domain-specific information of downstream tasks.
This seems to contradict the data-free nature of task arithmetic. In light of this, we raise the question:
\begin{introbox}
\begin{center}
   \textit{Is domain-specific data necessary for model merging?} 
\end{center}
\end{introbox}

We provide an affirmative theoretical answer to this question. We prove, for the first time, that the \textit{worst-case} performance of \textit{any data-agnostic} model merging algorithm can be arbitrarily bad, even for a simple linear model. Therefore, although data-agnostic algorithm achieves great empirical success, it is theoretically reasonable to assume access to at least a few-shot dataset. 

Building on these theoretical insights, we next address the empirical challenge of few-shot model merging by exploring the following question:
\begin{introbox}
\begin{center}
   \textit{How to fully leverage the data and the fine-tuned models to improve merging performance?} 
\end{center}
\end{introbox}

To this end, we frame model merging as a teacher-student distillation problem, where the goal is to transfer the knowledge from multiple fine-tuned models~(teacher) into the merged model~(student). However, directly applying existing distillation algorithms often results in large training memory overhead, particularly for large language models with billions of parameters.

To address this challenge, we propose a novel model merging algorithm, \texttt{ProDistill}~(\textit{Pro}gressive Layer-wise \textit{Distill}ation). \texttt{ProDistill} implements distillation through an activation matching strategy, where the merging coefficients are trained to minimize the activation distance between teacher and student models.
The training objective in \texttt{ProDistill} is decomposed layer by layer~\citep{bengio2006greedy,kulkarni2017layer,hettinger2017forward,karkar2024module,sakamoto2024end}, enabling the algorithm to avoid traditional end-to-end training and instead progressively train each layer of the model.
See Figure~\ref{fig: overview} for an illustration. 

We conduct extensive experiments to evaluate the performance of \texttt{ProDistill} across various tasks, architectures, and scales.
\footnote{Code is available at \url{https://github.com/JingXuTHU/Scalable_Model_Merging_with_Progressive_Layerwise_Distillation}.}
Compared to both training-based and training-free baselines, \texttt{ProDistill} achieves a notable 6.14\% increase in absolute performance for vision tasks and 6.61\% increase for natural language understanding tasks.

Furthermore, \texttt{ProDistill} demonstrates improved data and computation efficiency, and incurs significantly lower memory costs. 
This makes \texttt{ProDistill} scalable to large model sizes. We apply \texttt{ProDistill} to merge large language models with over 10B parameters. To the best of our knowledge, this is the first time a \textit{training-based} merging algorithm has been scaled to such a large model size.

We summarize our contribution as follows:
\begin{itemize}[topsep=0em, itemsep=0em]
  \item We provide the first theoretical analysis on the necessity of domain-specific data for model merging, proving its critical role in ensuring effective merging performance.
  \item We propose \texttt{ProDistill}, a novel model merging algorithm that leverages teacher-student distillation to progressively merge model layers.  
  \item We conduct comprehensive empirical analyses to demonstrate the state-of-the-art performance of \texttt{ProDistill} on a wide variety of tasks. Our experiments highlight the data, computation, and memory efficiency of the proposed method. 
\end{itemize}
    
\section{Preliminaries}
We consider model merging in a pretrain-to-finetune setup. Let $\boldsymbol{\theta}_0$ denote the weights of a pre-trained model. Consider a set of $T$ tasks, each with a model $\boldsymbol{\theta}_i$ fine-tuned from $\boldsymbol{\theta}_0$.
Model merging aims to combine the knowledge learned by task-specific models $\boldsymbol{\theta}_i$ into a unified model $\hat{\boldsymbol{\theta}}$, which preserves the generalization ability of the pre-trained model and incorporates the specialized knowledge from each task.

\vspace{-8pt}
\paragraph{Task vectors.} A key insight in this setup is the task vectors. The \textit{task vector} for the $i$-th task is defined as $\boldsymbol{\tau}_i = \boldsymbol{\theta}_i - \boldsymbol{\theta}_0$.
An effective model merging method~\citep{ilharco2022editing, zhang2023composing} is to compute a weighted average of task vectors and add it back to the pre-trained model: 
\vspace{-3pt}
\begin{align*} 
\hat{\boldsymbol{\theta}} = \boldsymbol{\theta}_0 + \sum_{i=1}^T \boldsymbol{\lambda}_i\circ \phi(\boldsymbol{\tau}_i), 
\end{align*}
where $\boldsymbol{\lambda}_i$ denotes the merging coefficients, and $\phi(\cdot)$ is an optional transformation function applied to the task vectors.

The merging coefficients $\boldsymbol{\lambda}_i$ can operate at different granularities. Common approaches include task-wise granularity~\citep{ilharco2022editing}, which assigns a single merging coefficient to each task, and layer-wise granularity~\citep{yang2023adamerging}, which assigns a coefficient to each layer of the models. In this paper, we take one step forward and consider \textbf{element-wise granularity}.\footnote{The \textit{layer-wise} in the title and algorithm name does not refer to the granularity of $\boldsymbol{\lambda}_i$, but instead  refers to the training procedure.}
Specifically, the
merging coefficient $\boldsymbol{\lambda}_{i}$ has the same dimensionality as
$\boldsymbol{\theta}_{i}$, and an element-wise multiplication $\boldsymbol{\lambda}_{i}\circ \boldsymbol{\tau}_{i}$ is performed for each task. In this paper, we do not apply additional transformations $\phi(\cdot)$ to the task vectors, as our method is parallel to the transformation-based methods.

\vspace{-8pt}
\paragraph{Notations. }
We use $\mathcal{D}_i$ to denote a few-shot unlabeled validation dataset for each task that is possibly available.
Define $\varphi(\boldsymbol{\theta}, \cdot)$ as the feature mapping of the model parameterized by $\boldsymbol{\theta}$, which gives the vectorized embedding of all intermediate layers. Let $L$ denote layer number. For model weights $\boldsymbol{\theta}$ and layer index $l$, we use $\boldsymbol{\theta}^{(l)}$ to
denote the parameter of the $l$-th layer and $\varphi^{(l)}(\boldsymbol{\theta}^{(l)}, \cdot)$
to denote the feature mapping function defined by this layer. 
    \section{Theoretical Limitations on Data-Agnostic Model Merging}
Model merging algorithms can be broadly classified into two categories based on data availability: \textit{data-agnostic} algorithms, which only use the weights of pre-trained and fine-tuned models~(\emph{e.g.}, ~\citet{ilharco2022editing, yadav2024ties, yu2024language}), and \textit{data-dependent} algorithms, which require access to a validation set~(\emph{e.g.},~\citet{matena2022merging, jin2022dataless}). While data-agnostic algorithms have shown significant empirical success, we prove that their \textit{worst-case} performance can be arbitrarily poor, even for simple linear models.

\subsection{Hardness Results for Fixed Models}

Consider the following simplified setup for model merging. Suppose we have two tasks with datasets $\mathcal{D}_1, \mathcal{D}_2$ and loss function $\ell(\cdot, \cdot)$. Let $f_1, f_2$ denote two models to merge, and let $\mathcal{M}$ denote a \textit{data-agnostic} model merging algorithm, which we assume to be deterministic for simplicity. 

The following hardness result states that for any such algorithm $\mathcal{M}$, one can always construct adversarial datasets such that the merging performance is arbitrarily bad. 

\begin{restatable}{theorem}{thma}
\label{thm: a}
    There exist a task and loss function $\ell$, such that for any data-agnostic model merging algorithm $\mathcal{M}$, any pair of models $f_1\neq f_2$, and any $\varepsilon, C>0$, there exists two datasets $\mathcal{D}_1, \mathcal{D}_2$, such that $f_1, f_2$ have a near-zero loss  on $\mathcal{D}_1$ and $\mathcal{D}_2$, respectively:
    \vspace{0pt}
    \begin{align*}
        \ell(D_1, f_1)\le \varepsilon,\quad  \ell(D_2, f_2)\le \varepsilon, 
    \end{align*}
    but the merged model $\hat{f}=\mathcal{M}(f_1, f_2)$ has a constant loss on $\mathcal{D}_1\cup \mathcal{D}_2$:
    \vspace{0pt}
    \begin{align*}
        \ell(\mathcal{D}_1\cup \mathcal{D}_2, \hat{f})\ge C.
    \end{align*}
    On the other hand, there exists a ground truth model $f^*$ that has near-zero loss on $\mathcal{D}_1$ and $\mathcal{D}_2$:
    \vspace{0pt}
    \begin{align*}
        \ell(\mathcal{D}_1\cup \mathcal{D}_2, f^*)\le \varepsilon.
    \end{align*}
\end{restatable}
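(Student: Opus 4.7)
The key observation driving the proof is that a data-agnostic algorithm $\mathcal{M}$ produces its output $\hat{f}=\mathcal{M}(f_1,f_2)$ from the pair $(f_1,f_2)$ alone, independent of any data. Consequently, the adversary is allowed to choose $\mathcal{D}_1,\mathcal{D}_2$ \emph{after} inspecting $\hat{f}$. The plan is to take the task to be linear regression on $\mathbb{R}^d$ with $d\ge 2$ under squared loss, take each $\mathcal{D}_i$ to be a single labeled point, and exploit the fact that two inputs in linearly independent directions can be simultaneously interpolated by a single linear $f^*$, while $\hat{f}$ can be forced off at least one of them.

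First, since $f_1\neq f_2$, we may assume without loss of generality that $\hat{f}\neq f_1$ (otherwise swap the roles of $f_1$ and $f_2$). Viewed as linear functions on $\mathbb{R}^d$, $\hat{f}$ and $f_1$ agree only on a hyperplane, so there exists $x_1$ with $f_1(x_1)\neq \hat{f}(x_1)$. Rescaling $x_1\mapsto c\,x_1$ for arbitrarily large $c$ preserves the inequality and, by linearity, scales the gap $(\hat{f}(x_1)-f_1(x_1))^2$ quadratically, so we can force it to exceed any prescribed $C$. Take $\mathcal{D}_1=\{(x_1,f_1(x_1))\}$. Next, pick $x_2$ linearly independent of $x_1$ and set $\mathcal{D}_2=\{(x_2,f_2(x_2))\}$. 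By construction $\ell(\mathcal{D}_1,f_1)=0$ and $\ell(\mathcal{D}_2,f_2)=0$, trivially meeting the $\varepsilon$ bound. Since $\{x_1,x_2\}$ spans a 2-dimensional subspace, a linear $f^*$ interpolating both points exists, giving $\ell(\mathcal{D}_1\cup\mathcal{D}_2,f^*)=0$, while $\hat{f}$ inherits loss at least $C$ from its error on $\mathcal{D}_1$ alone (if the convention averages rather than sums the per-point losses, scale $x_1$ further to absorb the factor).

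The main subtlety is the quantifier order: the adversary selects $\mathcal{D}_1,\mathcal{D}_2$ with full knowledge of $\mathcal{M}$ and the fixed pair $(f_1,f_2)$, which is precisely what data-agnosticism permits. A minor modeling point is the assumption that $\mathcal{M}$ is deterministic, as stated; for randomized $\mathcal{M}$ the same bound holds in expectation, since $f_1$ and the (random) $\hat{f}$ still differ almost surely on almost every $x_1$. Extending beyond linear regression is straightforward as long as the hypothesis class is expressive enough to interpolate two arbitrary input-output pairs and the loss admits amplification by scaling either inputs or labels; this is what makes the negative result robust rather than specific to one toy model.
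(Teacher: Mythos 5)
Your proposal is correct and follows essentially the same route as the paper: both construct a linear-regression instance with single-point datasets whose labels are the respective models' own predictions (the paper phrases this as orthogonality of $(\boldsymbol{x}_i,y_i)$ to $(\boldsymbol{w}_i,-1)$), amplify the merged model's error on $\boldsymbol{x}_1$ by scaling, and use linear independence of $\boldsymbol{x}_1,\boldsymbol{x}_2$ to exhibit an interpolating $f^*$. Your explicit handling of the averaging convention and the remark on randomized $\mathcal{M}$ are harmless additions that do not change the argument.
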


\vspace{-5pt}
Theorem~\ref{thm: a} is proved by adversarially constructing linear regression instances based on the merged model. The complete proofs are deferred to Appendix~\ref{apx: proof}.

\subsection{Hardness Results for Learned Models}

Theorem~\ref{thm: a} assumes fixed models $f_1$ and $f_2$, which can deviate from real-world scenarios where models are trained on datasets. To address this, we extend the analysis to cases where models are learned using an algorithm $\mathcal{L}$, with $f_1 = \mathcal{L}(\mathcal{D}_1), f_2 = \mathcal{L}(\mathcal{D}_2)$. We prove the following result. 

\begin{restatable}{theorem}{thmb}
\label{thm: b}
There exist a task, a loss function $\ell$ and a learning algorithm $\mathcal{L}$, such that for any data-agnostic model merging algorithm $\mathcal{M}$ and any $\varepsilon, C>0$, there exist two adversarial datasets $\mathcal{D}_1, \mathcal{D}_2$, such that $f_1 = \mathcal{L}(D_1), f_2=\mathcal{L}(\mathcal{D}_2)$ have a near-zero loss  on $\mathcal{D}_1$ and $\mathcal{D}_2$ respectively:
    \vspace{0pt}
    \begin{align*}
        \ell(D_1, f_1)\le \varepsilon,\quad  \ell(D_2, f_2)\le \varepsilon, 
    \end{align*}
    but the merged model $\hat{f}=\mathcal{M}(f_1, f_2)$ has a constant loss on $\mathcal{D}_1\cup \mathcal{D}_2$:
    \vspace{0pt}
    \begin{align*}
        \ell(\mathcal{D}_1\cup \mathcal{D}_2, \hat{f})\ge C.
    \end{align*}
    On the other hand, the model learned on the merged dataset $f^* = \mathcal{L}(\mathcal{D}_1\cup \mathcal{D}_2)$ that has near-zero loss on $\mathcal{D}_1$ and $\mathcal{D}_2$:
    \vspace{0pt}
    \begin{align*}
        \ell(\mathcal{D}_1\cup \mathcal{D}_2, f^*)\le \varepsilon.
    \end{align*}
\end{restatable}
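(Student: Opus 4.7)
The plan is to build on Theorem~\ref{thm: a} by fixing a learning algorithm over which we have fine control. I would take the task to be linear regression in $\mathbb{R}^d$ (with $d \geq 3$) under the squared loss, and set $\mathcal{L}$ to be the minimum-norm ordinary least-squares algorithm. Both choices are made before $\mathcal{M}$ is revealed, so they satisfy the ``for any $\mathcal{M}$'' quantifier. Two properties of min-norm OLS drive the argument: on a dataset whose labels come from a linear ground truth $a$ restricted to the span $V$ of the inputs, it returns the orthogonal projection of $a$ onto $V$, so we have exact control over $\mathcal{L}(\mathcal{D}_i)$; and appending a data point that already lies on the fitted hyperplane does not change the output.

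Given a data-agnostic $\mathcal{M}$, I would construct the adversarial datasets in two stages. First, pick orthogonal subspaces $V_1, V_2 \subset \mathbb{R}^d$ and a target ground-truth vector $a^\star \in V_1 \oplus V_2$; let $a_i$ be the orthogonal projection of $a^\star$ onto $V_i$. The base datasets $\mathcal{D}_i^0$ take a basis of $V_i$ as inputs with labels $a_i \cdot x$. By the first property of min-norm OLS, $\mathcal{L}(\mathcal{D}_i^0) = a_i$, and by orthogonality, $\mathcal{L}(\mathcal{D}_1^0 \cup \mathcal{D}_2^0) = a_1 + a_2 = a^\star$, which fits both base datasets with zero loss. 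Second, compute $\hat a = \mathcal{M}(a_1, a_2)$ and, in the spirit of Theorem~\ref{thm: a}, seek a vector $x_\star \in a_1^\perp \cap a_2^\perp$ satisfying $\hat a \cdot x_\star \neq 0$. Appending $(x_\star, 0)$ to both base datasets is simultaneously consistent with $a_1, a_2, a^\star$, so by the second property the OLS outputs are unaffected; yet the merged model $\hat a$ now incurs loss $(\hat a \cdot x_\star)^2$ on the union, which can be scaled to any $C$ by scaling $x_\star$.

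The main obstacle is that $x_\star$ may fail to exist when $\hat a$ lies in $\operatorname{span}(a_1, a_2)$, for instance when $\mathcal{M}$ is a linear combination of its inputs. I would handle this with a short case split. If $\hat a = \alpha a_1 + \beta a_2$ with $(\alpha, \beta) \neq (1, 1)$, the base datasets already suffice, since the loss of $\hat a$ on $\mathcal{D}_1^0 \cup \mathcal{D}_2^0$ equals $(\alpha - 1)^2 \|a_1\|^2 + (\beta - 1)^2 \|a_2\|^2 > 0$ and scales quadratically with $\|a^\star\|$. Otherwise $\mathcal{M}$ coincides with exact addition on every orthogonal pair; in this regime I fall back on a degenerate construction with $a_1 = a_2 = a$ and $\mathcal{D}_1 = \mathcal{D}_2$ a basis of $\mathbb{R}^d$ labeled by $a$, where all OLS outputs equal $a$ but $\mathcal{M}(a, a) = 2a$ produces loss scaling with $\|a\|^2$; and if $\mathcal{M}$ further satisfies $\mathcal{M}(a, a) = a$, a two-dimensional non-orthogonal construction (e.g.\ $V_1 = \operatorname{span}(e_1), V_2 = \operatorname{span}(e_1 + e_2)$) makes the projection-based ground truth $a^\star$ a specific function of $a_1, a_2$ that the single fixed output $\mathcal{M}(a_1, a_2)$ cannot match for all parameter choices. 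Combining these sub-cases covers every data-agnostic $\mathcal{M}$; verifying the near-zero bounds for $f_1, f_2, f^\star$ and the scaling of the merged loss is then routine.
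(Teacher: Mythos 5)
Your overall strategy is a legitimate alternative to the paper's: the paper uses a two-dimensional max-margin classification problem with an unbounded hinge-type loss, whereas you use min-norm OLS for regression; both exploit the fact that the learner ignores data points that are already consistent with its output, so that a single far-away appended point can blow up the merged model's loss while leaving $f_1$, $f_2$ and $f^*$ untouched. Your Case A (when $\hat a=\mathcal{M}(a_1,a_2)\notin\operatorname{span}(a_1,a_2)$) is correct and is the exact analogue of the paper's argument. However, your case analysis over the remaining $\mathcal{M}$ has a genuine gap. In the subcase $\hat a=\alpha a_1+\beta a_2$ with $(\alpha,\beta)\neq(1,1)$ you drive the loss above $C$ by ``scaling $a^\star$,'' but rescaling $a^\star$ rescales $a_1,a_2$, which are the \emph{inputs} to $\mathcal{M}$; the adversarial $\mathcal{M}$ is an arbitrary function, so $(\alpha,\beta)$ may change with the scale and the loss need not grow. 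Concretely, $\mathcal{M}(a_1,a_2)=(1+1/\|a_1\|)a_1+(1+1/\|a_2\|)a_2$ on orthogonal pairs keeps the base-configuration loss bounded by a constant independent of $\|a^\star\|$, is never exactly additive (so your ``otherwise'' branch never fires), and can be defined to equal $a$ on diagonal pairs, defeating your degenerate construction as well; only your final non-orthogonal subcase could catch it, and that subcase is asserted rather than proved (you would need two distinct dataset pairs producing the same $(f_1,f_2)$ but demanding different merged models, plus an argument that the resulting loss exceeds an \emph{arbitrary} $C$, which again cannot come from rescaling $\mathcal{M}$'s inputs).

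The repair is to treat this subcase the same way as Case A: keep $a_1,a_2$ (hence $\hat a$ and $(\alpha,\beta)$) fixed and append to $\mathcal{D}_1$ the point $(t\,a_1,\; t\|a_1\|^2)$, which is consistent with both $a_1$ and $a^\star$ and lies in the row space, so $\mathcal{L}(\mathcal{D}_1)=a_1$ and $f^*=a^\star$ are unchanged, while $\hat a$ incurs squared error $t^2(\alpha-1)^2\|a_1\|^4\to\infty$. That reduces the surviving algorithms to those exactly additive on orthogonal pairs, which you must still eliminate rigorously. Note how the paper avoids this entire case analysis: its four fixed data points pin down $f_1,f_2$ once and for all, so $\hat f$ is a single fixed nondegenerate linear classifier, and one then shows directly that some point $(p,q)$ with $p>1$ or $q>1$ satisfies $\hat w_1p+\hat w_2q+\hat b<-5C$ while remaining outside the margin of the relevant $f_i$. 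If you adopt the appended-point fix uniformly, your argument becomes complete in Cases A and B1, but as written the proposal does not yet cover every data-agnostic $\mathcal{M}$.
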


\vspace{-5pt}
The hard instance constructed in the proof involves solving a linear-separable classification problem with max-margin classifiers. 
A broad range of algorithms fall into this category, including traditional support vector machine (SVM)~\citep{cortes1995support} and gradient descent algorithms that have max-margin implicit bias~\citep{nacson2019convergence, lyu2019gradient}.
The key insight is that the learning process often discards certain information, such as distant data points outside the margin. This enables adversarial manipulation of the datasets to degrade merging performance.

\begin{remark}
\textit{Theorem~\ref{thm: a} and~\ref{thm: b} are both worst-case analyses. They highlight the fundamental limitations of data-agnostic model merging algorithms, but do not contradict their empirical success.
Instead, these results underscore the importance of data availability in achieving robust and consistent merging performance.}
\end{remark}
    \section{A Practical Algorithm for Few-Shot Model Merging}
\label{sec:method}

\begin{algorithm}[t]
    \small
    \caption{\texttt{ProDistill}~(\textit{Pro}gressive Layer-wise \textit{Distill}ation)}
    \label{alg:model_merging}
    \SetAlgoLined
    \KwIn{Pre-trained model weights $\boldsymbol{\theta}_{0}$, Fine-tuned model weights $\boldsymbol{\theta}_i$, unlabeled validation sets $\{\mathcal{D}_{i}\}_{i=1}^{T}$.}
    \KwOut{Merging coefficients $\hat{\boldsymbol{\lambda}}_{i}^{(l)}$, $1\le i \le T$, $1\le l\le L$. }

    \textbf{Initialize} $\mathcal{D}_i^{(0)} = \{(\boldsymbol{x}, \boldsymbol{x}) : \boldsymbol{x} \in \mathcal{D}_i\}$, and $\boldsymbol{\tau}_i = \boldsymbol{\theta}_i - \boldsymbol{\theta}_0$ for $i = 1, \ldots, T$.

    \For{$l = 1$ \textbf{\KwTo} $L$}{
        \textbf{Solve the objective function using gradient descent:}
            {\small \begin{align*}\{\hat{\boldsymbol{\lambda}}_i^{(l)}&\}_{i=1}^T = \argmin_{\{\boldsymbol{\lambda}_i^{(l)}\}_{i=1}^T}\sum_{i=1}^{T} \frac{1}{2T|\mathcal{D}_{i}|}\sum_{(\boldsymbol{z}_1, \boldsymbol{z}_2) \in \mathcal{D}_i^{(l-1)}}\\&\hspace{-20pt}\left\|\varphi^{(l)}\Big(\boldsymbol{\theta}_{0}^{(l)}+ \sum_{j=1}^{T} \boldsymbol{\lambda}_{j}^{(l)}\circ \boldsymbol{\tau}_{j}^{(l)}, \boldsymbol{z}_1\Big) - \varphi^{(l)}\Big(\boldsymbol{\theta}_{i}^{(l)}, \boldsymbol{z}_2\Big)\right\|^{2}\end{align*} }

        \For{$i = 1$ \textbf{\KwTo} $T$}{
            \textbf{Update} $\mathcal{D}_i^{(l)}$ \textbf{using:}
            {\small \begin{align*}\mathcal{D}_{i}^{(l)}= \Big\{\Big( \varphi^{(l)}\Big(\boldsymbol{\theta}_{0}^{(l)}+ \sum_{j=1}^{T} \hat{\boldsymbol{\lambda}}_{j}^{(l)}\circ \boldsymbol{\tau}_{j}^{(l)}, \boldsymbol{z}_1\Big), \\ \varphi^{(l)}\Big(\boldsymbol{\theta}_{i}^{(l)}, \boldsymbol{z}_2\Big) \Big): (\boldsymbol{z}_1, \boldsymbol{z}_2) \in \mathcal{D}_{i}^{(l-1)}\Big\}\end{align*} }
        }
    }
    \KwRet{$\hat{\boldsymbol{\lambda}}_i^{(l)}$ \textnormal{for} $1 \leq i \leq T$, $1 \leq l \leq L$.}
\end{algorithm}

In the previous section, we prove that data availability is crucial for effective model merging. 
Next, we propose a practical merging algorithm designed for such data-available settings. We start with a na\"ive distillation algorithm that directly minimizes the embedding distance, and build upon it to develop the main algorithm of this paper. 

\subsection{Model Merging as Knowledge Distillation}
Model merging can be viewed through the lens of knowledge distillation, a perspective that remains underexplored within the community. In this context, the teacher models correspond to fine-tuned models, and the goal is to create a student model that integrates their knowledge and performs well on the downstream tasks. 

A common strategy in knowledge distillation is to align the internal features of the teacher and student models~\citep{young2022feature,jin2024align}. This coincides with recent findings in the model merging community, which show that the performance of the merged model $\hat{\boldsymbol{\theta}}$ is positively correlated with the similarity between its embeddings $\varphi(\hat{\boldsymbol{\theta}}, \boldsymbol{x})$ and those of the fine-tuned models $\varphi(\boldsymbol{\theta}_i, \boldsymbol{x})$.~\citep{zhou2023going, yang2024representation}. 
Building on this insight,
we propose to align the merged model and fine-tuned models in the feature space by solving the following problem:
{
\begin{align}\label{eq:obj_naive}\min_{\boldsymbol{\lambda}_1, \cdots \boldsymbol{\lambda}_T}&\sum_{i=1}^{T}\frac{1}{2T|\mathcal{D}_{i}|}\sum_{\boldsymbol{x}\in\mathcal{D}_i}
\nonumber\\&\left\|\varphi\left(\boldsymbol{\theta}_{0}+\sum_{j=1}^{T}\boldsymbol{\lambda}_{i}\circ \boldsymbol{\tau}_{i}, \boldsymbol{x}\right) -\varphi\left(\boldsymbol{\theta}_{i},\boldsymbol{x}\right)\right\|^{2}.\end{align} }

The objective minimizes the $\ell_2$ distance between the internal embeddings of the merged model $\boldsymbol{\theta}_{0}+\sum_{j=1}^{T}\boldsymbol{\lambda}_{i}\circ \boldsymbol{\tau}_{i}$ and those of the fine-tuned model $\boldsymbol{\theta}_i$, over the unlabeled validation set $\mathcal{D}_i$ for each task.
Intuitively, this encourages the merged model to behave similarly to each fine-tuned model, in its specific input domain.
The objective can be solved using standard optimization algorithms such as Adam~\citep{kingma2014adam} or AdamW~\citep{loshchilov2017decoupled} on the merging coefficients $\boldsymbol{\lambda}_i$.

Throughout this paper, we refer to directly minimizing Objective~\ref{eq:obj_naive} as \texttt{DistillMerge}. Despite its simplicity, it offers several insights:

1. \textbf{Adaptive Merging Coefficient.} Gradient descent on $\boldsymbol{\lambda}_i$ can be interpreted as selecting the appropriate merging coefficients, whose empirical importance has been repeatedly highlighted in recent works~\citep{jin2022dataless, yang2023adamerging,gauthier2024merging}. Notably, the element-wise granularity of $\boldsymbol{\lambda}_i$ gives the merged model greater expressive power to fit the objective. 

2. \textbf{Fine-grained Model Merging via Distillation.} Minimizing~\ref{eq:obj_naive} can also be viewed as a way to distill the knowledge from the fine-tuned teacher models into a merged student model.
Unlike standard distillation algorithms~\citep{hinton2015distilling}, Objective~\ref{eq:obj_naive} leverages task vectors $\boldsymbol{\tau}_i$ as a prior on the trainable parameters. 
To justify this design choice, we further show in Appendix~\ref{apx: direct train} that \textit{in the few-shot setup}:
\begin{itemize}
    \setlength{\topsep}{0em}
    \setlength{\itemsep}{0em}
    \item Feature-based distillation loss provides a stronger supervision compared with logit-based distillation;
    \item Optimizing the scaling coefficients yields better results, compared with directly optimizing the model weights. 
\end{itemize}

\subsection{Efficient Implementation by Progressive Layer-wise Distillation}
While Equation~\ref{eq:obj_naive} is a reasonable objective for  training the merging coefficients, 
directly optimizing this objective incurs significant memory overhead. This is because both the task vectors and the trainable merging coefficients, which have the same dimensionality as the model parameters, have to be stored in memory. The memory cost scales linearly with the number of tasks. Besides, the optimization process requires to store the activations, gradients and optimizer states, further exacerbating memory overhead. This challenge becomes particularly critical when merging large language models, which often contain billions of parameters.

To mitigate this issue, we propose the following surrogate to Objective~\ref{eq:obj_naive}.
Instead of optimizing the global objective across all layers simultaneously, we adopt a progressive, layer-by-layer merging strategy.
For each layer $l~(1\le l \le L)$, we minimize the feature distance between layer embeddings using the following objective:

{\small
\vspace{-20pt}
\begin{align}
    \min_{\boldsymbol{\lambda}_1^{(l)}, \cdots, \boldsymbol{\lambda}_T^{(l)}} & \sum_{i=1}^{T}\frac{1}{2T|\mathcal{D}_{i}|}\sum_{(\boldsymbol{z}_1, \boldsymbol{z}_2) \in \mathcal{D}_i^{(l-1)}}\nonumber\\
& \hspace{-30pt} \left\| \varphi^{(l)}\Big( \boldsymbol{\theta}_{0}^{(l)}+ \sum_{j=1}^{T}\boldsymbol{\lambda}_{i}^{(l)}\circ \boldsymbol{\tau}_{i}^{(l)}, \boldsymbol{z}_1 \Big) - \varphi^{(l)}\left( \boldsymbol{\theta}_{i}^{(l)}, \boldsymbol{z}_2 \right) \right\|^{2} .
\end{align}
\label{eq:obj}
}

Compared to Objective~\ref{eq:obj_naive}, this layer-wise formulation focuses only on minimizing the embedding distances in each layer, instead of all intermediate embeddings simultaneously. 
Moreover, this objective introduces \textbf{dual inputs} by feeding different intermediate embeddings to the fine-tuned and merged models. 
Specifically, $\mathcal{D}_{i}^{(l)}$ maintains pairs of embeddings $(\boldsymbol{z}_1, \boldsymbol{z}_2)$ after the $l$-th layer, where $\boldsymbol{z}_1$ is the embedding of
the merged model, while $\boldsymbol{z}_2$ is the embedding of the fine-tuned
model. These internal activations are cached and updated using the trained coefficients for each layer by the following rule:
\begin{align*}
     & \mathcal{D}_{i}^{(0)}= \left\{(\boldsymbol{x}, \boldsymbol{x}): \boldsymbol{x} \in \mathcal{D}_{i} \right\},                                                                            \\
     & \mathcal{D}_{i}^{(l)}= \Bigg\{ \Big( \varphi^{(l)}\Big( \boldsymbol{\theta}_{0}^{(l)}+ \sum_{j=1}^{T}\hat{\boldsymbol{\lambda}}_{i}^{(l)}\circ \boldsymbol{\tau}_{i}^{(l)}\Big), \boldsymbol{z}_1 \Big), \\
     & \quad \varphi^{(l)}\left( \boldsymbol{\theta}_{i}^{(l)}, \boldsymbol{z}_2 \right) \Big): (\boldsymbol{z}_1, \boldsymbol{z}_2) \in \mathcal{D}_{i}^{(l-1)}\Bigg\}, \quad l \ge 1.
\end{align*}

This design better approximates the global distillation objective~\ref{eq:obj_naive}, \textbf{distinguishing it from previous merging algorithms based on feature alignment}~\citep{jin2022dataless, yang2024representation, anonymous2024leveraging}, which align the output of merged model and fine-tuned model \textit{under the same input}.
As demonstrated in Appendix~\ref{apx: layer input},  the incorporation of dual inputs is critical for achieving high performance in layer-wise training. 

We refer to this algorithm as \texttt{ProDistill}, short for \textit{Pro}gressive Layer-wise \textit{Distill}ation.
The pseudocode for \texttt{ProDistill} is given in Algorithm~\ref{alg:model_merging}.
Compared to \texttt{DistillMerge}, \texttt{ProDistill} offers substantial efficiency gains. When merging a specific layer, \texttt{ProDistill} only requires memory for the task vector and merging coefficients of the current layer, rather than the entire model. Furthermore, 
the forward and backward passes are also restricted within individual layers. Interestingly, unlike the common belief that layer-wise training leads to performance degradation, we show in Appendix~\ref{apx: end2end} that \texttt{ProDistill} outperforms its end-to-end counterpart \texttt{DistillMerge}.

    \section{Experiments}
\label{sec:exps}

In this section, we present comprehensive experiment results to evaluate the effectiveness of \texttt{ProDistill} across various settings. 

\begin{table*}[t]
\setlength{\tabcolsep}{4pt}
\centering
\caption{\textbf{Performance of merging ViT-B-32 models across eight downstream vision tasks.} \texttt{ProDistill} consistently outperforms the baselines under different data availability. The results for Localize-and-Stich are directly taken from~\citet{he2024localize}.}
\label{tab:vitb32}   
\begin{tabular}{l|cccccccc|cc}
\toprule
\textbf{Method} &\textbf{SUN397}& \textbf{Cars}& \textbf{RESISC45}& \textbf{EuroSAT}& \textbf{SVHN}& \textbf{GTSRB}& \textbf{MNIST}& \textbf{DTD} &\textbf{Avg}  \\
\midrule
{Individual}  & 75.34 & 77.73 & 95.98 & 99.89 & 97.46 & 98.73 & 99.69 & 79.36 & 90.52 \\
Task Arithmetic & 55.32 & 54.98 & 66.68 & 78.89 & 80.21 & 69.68 & 97.34 & 50.37 & 69.18 \\
\midrule
RegMean& 67.47 & 66.63 & 81.75 & 93.33 & 86.68 & 79.92 & 97.30 & 60.16 & 79.15 \\
Fisher merging & 63.95 & 63.84 & 66.86 & 83.48 & 79.54 & 60.11 & 91.27 & 49.36 & 69.80 \\
Localize-and-Stich  & 67.20 & 68.30 & 81.80 & 89.40 & 87.90 & 86.60 & 94.80 & 62.90 & 79.90 \\
AdaMerging& 63.69 & 65.74 & 77.65 & 91.00 & 82.48 & 93.12 & 98.27 & 62.29 & 79.28 \\ 
\rowcolor{lightyellow}
\texttt{ProDistill}~(Ours)& \textbf{68.90} & \textbf{71.21} & \textbf{89.89} & \textbf{99.37} & \textbf{96.13} & \textbf{95.29} & \textbf{99.46} & \textbf{68.03} & \textbf{86.04} \\
\bottomrule
\end{tabular}
\end{table*}

\begin{figure*}
    \centering
    \includegraphics[width=1.0\linewidth]{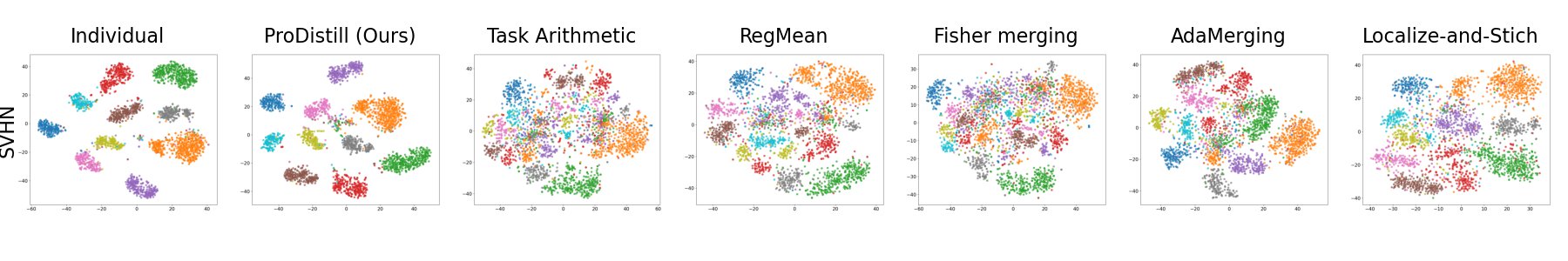}
    \caption{\textbf{The t-SNE visualization of ViT-B-32 model trained by different merging algorithms, on the SVHN dataset.} The features given by \texttt{ProDistill}  are the most separated, resembling those of fine-tuned models.}
    \label{fig:tsne_svhn}
\end{figure*}

\subsection{Setup}
\label{sec:setup}
We consider three main experimental setups: 
(1) Merging Vision Transformers~\citep{dosovitskiy2020image} on image classification tasks; 
(2) Merging BERT~\citep{devlin2018bert} and RoBERTa~\citep{liu2019roberta} models on natural language understanding~(NLU) tasks; 
(3) Merging LLAMA2~\citep{touvron2023llama2} model on natural language generation~(NLG) tasks. 

\paragraph{Tasks and Models:}
For image classification tasks, we follow the setting in~\citet{ilharco2022editing} and use Vision Transformer~(ViT) models pre-trained on the ImageNet dataset and subsequently fine-tuned on 8 downstream datasets. 
For NLU and NLG tasks, we merge the BERT-base and RoBERTa-base models fine-tuned on 8 NLU tasks from the GLUE~\citep{wang2018glue} benchmark, and perform pairwise merging of WizardLM-13B, WizardMath-13B and llama-2-13b-code-alpaca models, following the setting in~\citep{yu2024language}. 
Detailed information on the models and datasets can be found in Appendix~\ref{apx:dataset}.

\paragraph{Baselines:}
For vision and NLU tasks, we compare \texttt{ProDistill} with a wide range of baselines, including 
Task Arithmetic~\citep{ilharco2022editing}, 
Fisher merging~\citep{matena2022merging},
RegMean~\citep{jin2022dataless}, 
AdaMerging~\citep{yang2023adamerging} and Localize-and-Stich~\citep{he2024localize}. 
All methods, except Task Arithmetic, require a few-shot unlabeled validation dataset, which is randomly sampled from the training set, with validation shot set to 64 per task.
For NLG tasks, we compare \texttt{ProDistill} with Task Arithmetic~\citep{ilharco2022editing}, TIES-Merging~\citep{yadav2024ties} and WIDEN~\citep{yu2024extend}, due to scale constraints.
A detailed discussion of the baselines and their implementations is provided in Appendix~\ref{apx:baselines} and~\ref{apx:impl}.

\subsection{Results on Merging ViT models}
Table~\ref{tab:vitb32} presents the performance of merging ViT-B-32 models across eight downstream vision tasks.
The results for ViT-B-16 and ViT-L-14 are provided in Appendix~\ref{apx: more results}.

Our method consistently outperforms all baselines, yielding significant improvements in average performance.
Specifically, \texttt{ProDistill} achieves an average performance of 86.04\%, surpassing the baselines by 6.14\%. Notably, it is only 4\% below the average performance of the individual fine-tuned models.

We also visualize the final-layer activations of the merged model using t-SNE~\citep{van2008visualizing}. The results are given in Figure~\ref{fig:tsne_svhn} and Appendix~\ref{apx: tsne}. The visualization shows that the features given by \texttt{ProDistill} are more separated compared to the baselines, closely resembling those of the fine-tuned models.

\subsection{Results on Merging Encoder-based Language Models}
Table~\ref{tab:roberta} summarizes the results of merging RoBERTa models fine-tuned on the NLU tasks.
The results of BERT models are deferred to Appendix~\ref{apx: more results}.
Similar to the vision tasks, \texttt{ProDistill} achieves significant performance improvements of 6.61\% on the NLU tasks, outperforming all baselines across nearly all tasks. 

Unlike vision tasks, the NLU tasks in the GLUE benchmark have small class numbers. For example, SUN387 dataset consists of 397 classes, while CoLA only has 2 classes.  This class size disparity limits the performance of methods that operate directly on the model output logits, such as AdaMerging and Fisher merging.
Our method, along with RegMean, performs particularly well, emphasizing the importance of leveraging internal feature embeddings for effective model merging.

\begin{table*}[t]
\centering
\setlength{\tabcolsep}{5pt}
\caption{\textbf{Performance of merging RoBERTa models on the NLU tasks.} \texttt{ProDistill} achieves superior performance across almost all tasks.}
\label{tab:roberta} 
\begin{tabular}{l|cccccccc|cc}
\toprule
\textbf{Method} & \textbf{CoLA} & \textbf{SST-2} & \textbf{MRPC} & \textbf{STS-B} & \textbf{QQP} & \textbf{MNLI} & \textbf{QNLI} & \textbf{RTE} & \textbf{Avg} \\

\midrule
Individual & 0.5458 & 0.9450 & 0.8858 & 0.9030 & 0.8999 & 0.8710 & 0.9244 & 0.7292 & 0.8380\\
Task Arithmetic  & 0.0804 & 0.8475 & 0.7865 & 0.4890 & 0.8133 & 0.7063 & 0.7558 & 0.6534 & 0.6415 \\
\midrule
RegMean & 0.3022 & 0.9255 & 0.8183 & 0.5152 & \textbf{0.8176} & 0.7089 & 0.8503 & 0.6462 & 0.6980 \\
Fisher merging& 0.1633 & 0.7064 & 0.7264 & 0.1274 & 0.6962 & 0.4968 & 0.5599 & 0.5776 & 0.5068 \\
Localize-and-Stich& 0.0464 & 0.8922 & 0.7916 & \textbf{0.7232} & 0.7821 & 0.5709 & 0.7703 & 0.5632 & 0.6425 \\
AdaMerging& 0.000 & 0.8532 & 0.7875 & 0.5483 & 0.8086 & 0.7039 & 0.7247 & 0.6390 & 0.6332 \\
\rowcolor{lightyellow}
\rowcolor{lightyellow}
\rowcolor{lightyellow}
\texttt{ProDistill}~(Ours)& \textbf{0.4442} & \textbf{0.9312} & \textbf{0.8464} & 0.6942 & 0.8134 & \textbf{0.7857} & \textbf{0.8900} & \textbf{0.7076} & \textbf{0.7641} \\
\bottomrule
\end{tabular}
\end{table*}

\subsection{Results on Merging Large Language Models}
We present the results of merging the WizardMath-13B and Llama-2-13B-Code-Alpaca models in~Table~\ref{tab: code_math}, with additional results provided in Appendix~\ref{apx: more results} and generation examples provided in Appendix~\ref{apx: llm example}. These findings demonstrate that our method effectively scales up to models with over 10B parameters, and achieves superior performance compared to baselines.

\begin{table*}[t]
\centering
\caption{\textbf{Performance of merging LLM models on Code and Math tasks.} Our method demonstrates an improved performance and a strong scalability. The results of TIES-Merging and WIDEN are directly taken from~\citet{yu2024extend}.}
\begin{tabular}{l|cccc|cc}
\toprule
\textbf{Method} & \textbf{GSM8K} & \textbf{MATH} & \textbf{HumanEval} & \textbf{MBPP} & \textbf{Avg} & \textbf{Norm Avg} \\

\midrule
WizardMath-13B & 0.6361 & 0.1456 & 0.0671 & 0.0800 & 0.2322 & 0.6430 \\
Llama-2-13b-code-alpaca & 0.000 & 0.000 & 0.2378 & 0.2760 & 0.1285 & 0.5000 \\
\midrule
Task Arithmetic & \textbf{0.6467} & \textbf{0.1462} & 0.0854 & 0.0840 & 0.2406 & 0.6711\\
TIES-Merging & 0.6323 & 0.1356 & 0.0976 & 0.2240 & 0.2723 & 0.7868\\
WIDEN & 0.6422 & 0.1358 & 0.0976 & 0.0980 & 0.2434 & 0.6769\\
\rowcolor{lightyellow}
\texttt{ProDistill} (Ours) & 0.6279 & 0.1424 & \textbf{0.1280} & \textbf{0.2239} & \textbf{0.2806} & \textbf{0.8288}\\
\bottomrule
\end{tabular}
\label{tab: code_math}
\end{table*}
    \section{Further Analyses}
\label{sec: efficiency}

In this section, we provide additional analyses on the efficiency of \texttt{ProDistill}, in terms of data usage, computational cost, and memory requirements.

Additionally, we conduct a broad range of comparisons and ablation studies to further evaluate \texttt{ProDistill}.  The results can be found in Appendix~\ref{apx: ablations}, including:

\begin{itemize}[topsep=0em, itemsep=0em]
    \item Analyses of merging coefficient granularity~(Appendix~\ref{apx: granularity})
    \item Comparison with the end-to-end merging algorithm \texttt{DistillMerge}~(Appendix~\ref{apx: end2end})
    \item Comparison with standard supervised training and knowledge distillation~(Appendix~\ref{apx: direct train})
    \item Ablation studies on the design of dual inputs~(Appendix~\ref{apx: layer input}) 
\end{itemize}

\begin{figure*}[t]
    \centering
    \begin{minipage}{0.30\textwidth}
        \centering
        \includegraphics[width=\linewidth]{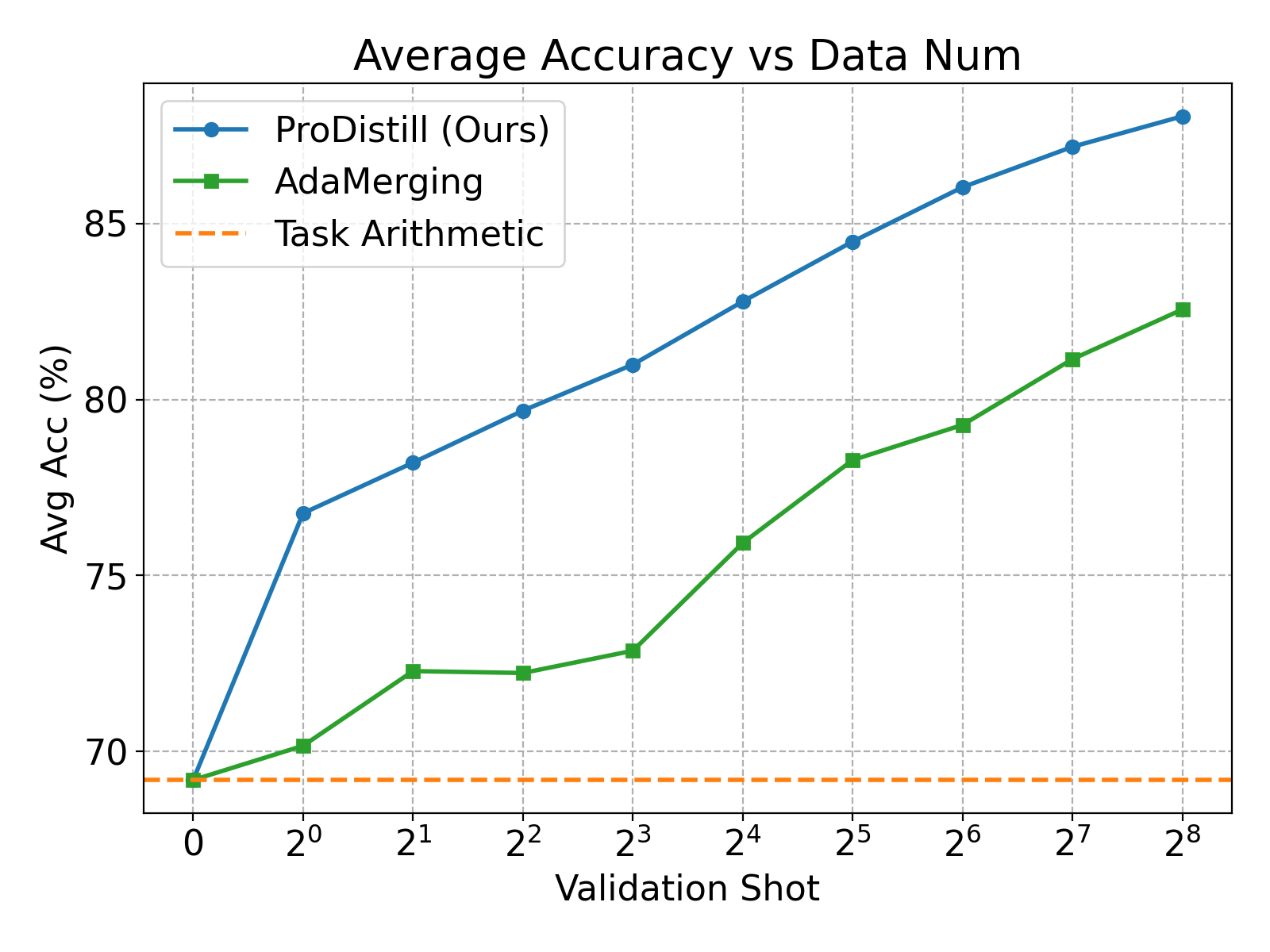}
    \end{minipage}%
    \hspace{0.0\textwidth} %
    \begin{minipage}{0.30\textwidth}
        \centering
        \includegraphics[width=\linewidth]{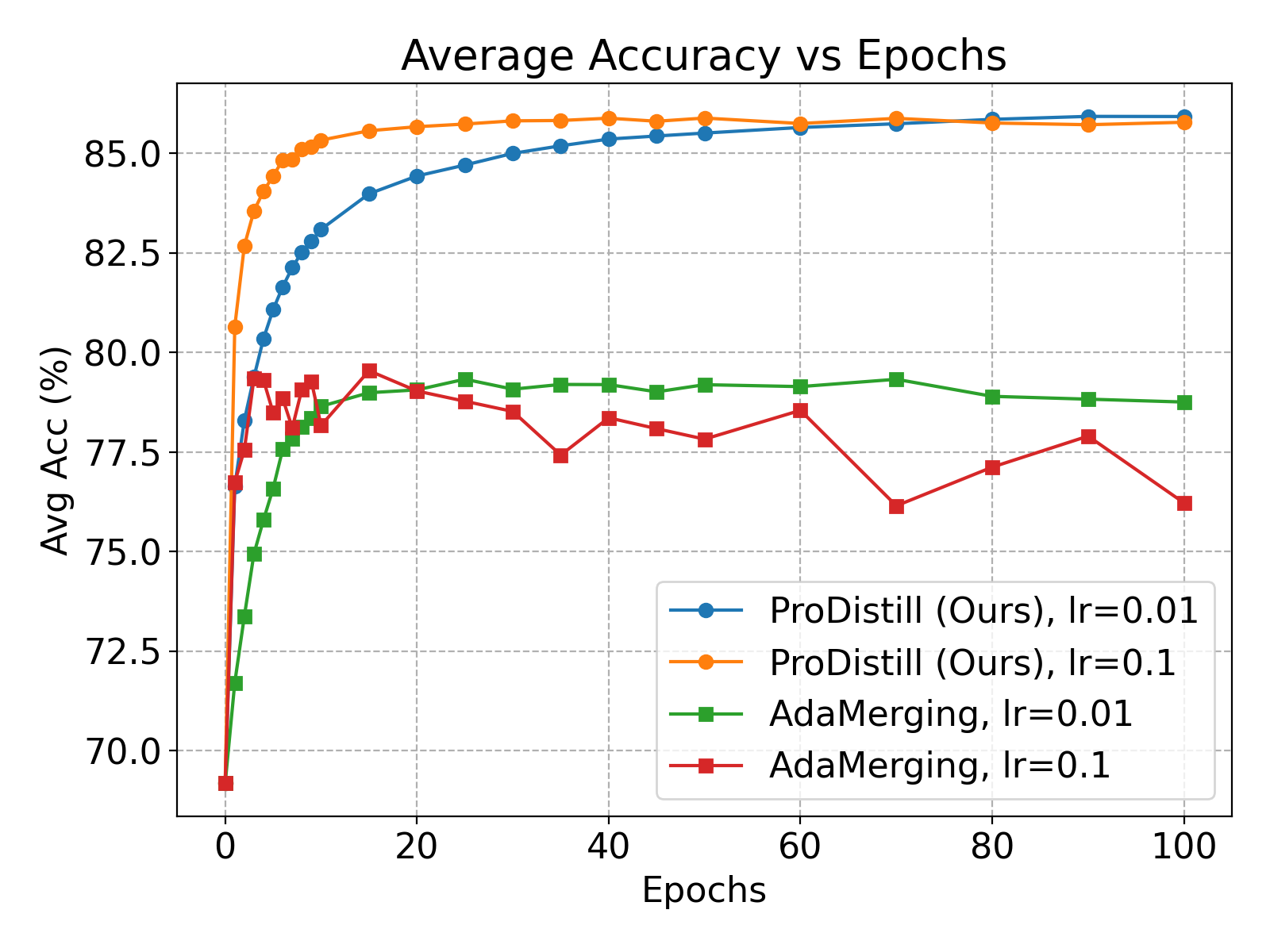}
    \end{minipage}%
    \hspace{0.0\textwidth} %
    \begin{minipage}{0.39\textwidth}
        \centering
        \vspace{4mm}
        \includegraphics[width=\linewidth]{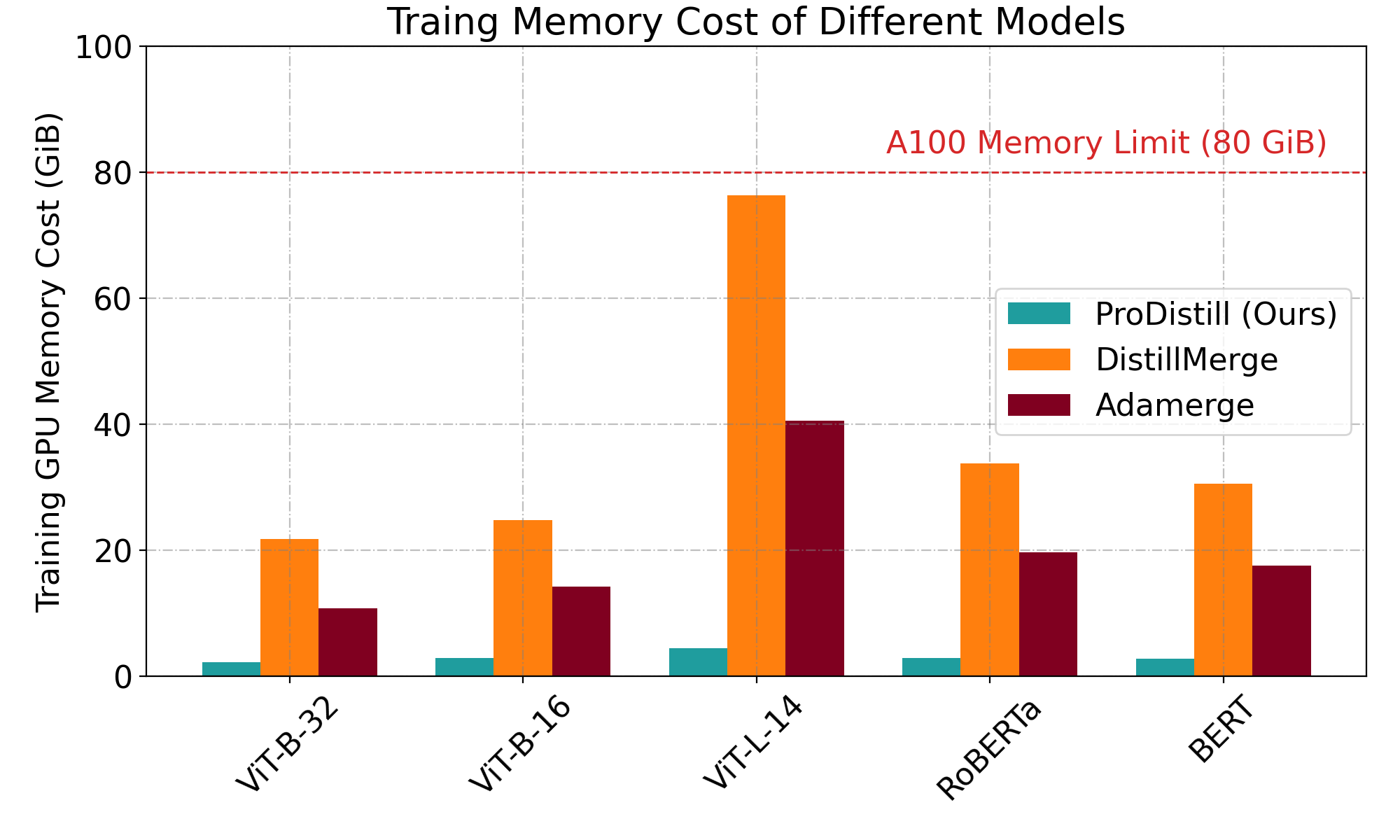}
    \end{minipage}
    \caption{\textbf{Analysis of Data, Computation and Memory Efficiency.} \textbf{Left:} The average accuracy of \texttt{ProDistill} and AdaMerging across 8 vision tasks, with different data availability. Our method demonstrates superior data efficiency.
    \textbf{Middle:} The average accuracy of \texttt{ProDistill} with different training epochs. Our algorithm achieves a fast convergence. 
    \textbf{Right:} The training GPU memory cost of \texttt{ProDistill}, its unoptimized counterpart \texttt{DistillMerge} and AdaMerging. Our method has a significantly smaller memory footprint. 
    }
\label{fig: data and memory efficiency}
\end{figure*}

\subsection{Data Efficiency}
\label{sec: data efficiency}
We first evaluate the data efficiency of \texttt{ProDistill} by varying the number of validation samples, ranging from 1 to 256. The results are presented in Figure~\ref{fig: data and memory efficiency}. 

For vision tasks, \texttt{ProDistill} achieves a performance improvement of over 7\% even with just 1-shot validation data per task. As the number of validation shots increases, the performance continues to improve, consistently surpassing that of AdaMerging. With 256 validation shot, the accuracy reaches 88.05\%, which is only 2\% lower than that of individual checkpoints. 

\subsection{Computation Efficiency}
\label{sec: compute efficiency}

We evaluate the computational efficiency of \texttt{ProDistill} by varying the training epochs from 1 to 100. We set the validation shot to 64, and choose learning rate from \{0.1, 0.01\}.  

The results, provided in Figure~\ref{fig: data and memory efficiency}, highlight the rapid convergence of \texttt{ProDistill}. With just one epoch, the average accuracy of the merged model has a significant improvement, rising from 69.8 to 80.6. After approximately 10 epochs, the accuracy is nearly identical to the final results. Thus, despite being a training-based algorithm, \texttt{ProDistill} demonstrates exceptional computational efficiency with fast convergence.

The computation efficiency can be partially attributed to \texttt{ProDistill}'s ability to leverage large learning rates effectively, which we hypothesize is due to its layer-wise training scheme. In contrast, algorithms such as AdaMerging exhibit unstable convergence at high learning rates, as shown in the figure.

\subsection{Memory Efficiency}
\label{sec: memory efficiency}
Next, we evaluate the memory efficiency of \texttt{ProDistill} by profiling the maximum GPU memory consumption during training (excluding pre-processing and evaluation). The batch size is set to 32 for vision tasks and 16 for NLU tasks. The validation shot is set to 64. We compare \texttt{ProDistill} with its unoptimized direct training version \texttt{DistillMerge} and AdaMerging. 

The results, shown in Figure~\ref{fig: data and memory efficiency}, illustrate that \texttt{ProDistill} has an almost negligible GPU memory footprint compared to the baselines. This difference is particularly evident for models with a large number of layers, such as ViT-L-14, since the memory cost of our method remains independent of the model's depth. Therefore, \texttt{ProDistill} offers substantial advantages in memory efficiency and is scalable in resource-constrained environments.

    \section{Related Works}
\paragraph{Model Merging via Weight Averaging.}
Weight averaging is an effective and widely adopted approach in model merging~\citep{izmailov2018averaging,wortsman2022model,ilharco2022editing}. Researchers have developed various methods to improve the averaging approach. One line of work focuses on minimizing conflicts and promoting disentanglement between task vectors, through sparsification~\citep{tang2023concrete,yadav2024ties, yu2024language,he2024localize,wang2024localizing,bowen2024beyond,deng2024dare,zhu2024model,davari2025model} or decomposition~\citep{tam2023merging,xiong2024multi, stoica2024model,wei2025modeling,gargiulo2024task,marczak2025no,yang2025mix}. Another line of work~\citep{ortiz2024task,tang2023parameter} employs linearized training to explicitly enforce linearity. Some studies explore methods for selecting optimal merging coefficients, using training-based~\citep{yang2023adamerging, gauthier2024merging,  nishimoto2024differentiable} or training-free~\citep{matena2022merging, jin2022dataless, zhou2024metagpt,wang2024lines,liu2024checkpoint,tang2025merging} approaches. Broadly speaking, our paper aligns with the former category. Other works~\citep{qi2024less, lu2024twin, zheng2024free,ohadapting,zhang2024channel,osial2024parameter,huang2024emr} propose dynamic model merging through task-specific routing and mixture-of-experts frameworks. \textbf{Our method differs from them by preserving the original model architecture}. 

In addition to merging fine-tuned models, there is a broader field of research exploring more general setups, such as merging independently trained models~\citep{singh2020model,ainsworth2022git,navon2023equivariant,horoi2024harmony,stoica2023zipit,xu2024training} and merging models with different architectures~\citep{avrahami2022gan,wan2024knowledge,wan2024fusechat}.

\vspace{-10pt}
\paragraph{Distillation and Activation Alignment.}
Knowledge distillation~\citep{hinton2015distilling,romero2014fitnets,yim2017gift} is a well-established topic in machine learning where a student model is trained to mimic the behavior of teacher models.
Some works leverage teacher-student activation matching to merge models and do multi-task learning ~\citep{li2020knowledge, ghiasi2021multi,yang2022factorizing,jin2022dataless,kong2024rethink,zhang2024knowledge,nasery2024pleas}, which share similarities with our paper. 
For example, the \textbf{Surgery} algorithm proposed in~\citep{yang2024representation, yang2024surgeryv2}  introduces lightweight, task-specific modules to facilitate post-merging activation matching. However, these methods differ from ours as dynamic model merging approaches.
\textbf{A very recent work \citet{anonymous2024leveraging}} matches the activations in each layer by solving linear equations of merging coefficient. Their approach differs from ours in several key aspects. 
First, their merging coefficient granularity is layer-wise, whereas we consider element-wise coefficients. This distinction is crucial, as their method cannot be directly extended to element-wise merging without making the linear equations under-determined.  Additionally, their layer inputs are generated by pretrained models, in contrast to the dual inputs approach used in our approach. 
The training-free algorithm \textbf{RegMean}~\citep{jin2022dataless} also shares similarity with our method. We leave its discussion to Appendix~\ref{apx:baselines}.
    \section{Conclusion}

In this paper, we propose a novel model merging algorithm \texttt{ProDistill} which uses progressive layer-wise distillation to efficiently merge large pre-trained models. Our theoretical analysis shows the necessity of domain-specific data for effective merging. Empirical results demonstrate that \texttt{ProDistill} outperforms existing methods across a variety of tasks, achieving significant performance gains with reduced memory costs, which makes it a scalable solution for merging large pre-trained models.

\section*{Impact Statement}

This paper introduces \texttt{ProDistill}, a scalable and efficient model merging algorithm. Its potential applications could improve AI deployment in resource-constrained environments, enabling the integration of specialized models across diverse tasks without significant retraining or computational overhead.

The broader societal impacts include reducing energy consumption and computational costs of training large models, promoting more sustainable AI development. Additionally, by improving the accessibility of advanced models, \texttt{ProDistill} could help democratize AI capabilities, making them more adaptable and widely available across industries.

There are potential ethical considerations, such as ensuring data privacy, mitigating biases, and preventing misuse of increasingly powerful AI systems. We emphasize the importance of responsibly managing the deployment of such technologies to minimize unintended consequences while maximizing their societal benefits.

    \bibliography{ref}
    \bibliographystyle{icml2025}

    \newpage
    \appendix
    \onecolumn
    
    \begin{center}
    \Large\bf Appendix
    \end{center}

    \section{Proofs}
\label{apx: proof}

\thma*

\begin{proof}
We construct a hard instance of linear regression to prove the theorem. Let $d$ denote the dimension of the data space. A data point has the form $\boldsymbol{z} = (\boldsymbol{x}, y)$, where $\boldsymbol{x}\in\mathbb{R}^d, y\in\mathbb{R}$, and the model is represented by a $d$-dimensional vector $\boldsymbol{w}\in\mathbb{R}^d$. The loss function is the $\ell_2$ loss $\ell(\boldsymbol{z}, \boldsymbol{w}) = \frac{1}{2}\|\boldsymbol{x}^\top \boldsymbol{w}-y\|^2$, and $\ell(\mathcal{D}, \boldsymbol{w}) = \text{Avg}(\ell(\boldsymbol{z}, \boldsymbol{w}))$.

Let $\boldsymbol{w}_1, \boldsymbol{w}_2, \hat{\boldsymbol{w}}$ denote the weights of $f_1, f_2, \hat{f}$. Since $f_1\neq f_2$, we can assume, without loss of generality, that $\boldsymbol{w}_1\neq \hat{\boldsymbol{w}}$. Then we have the simple linear algebra fact that, for a large enough $d$,  there exist $(\boldsymbol{x}_1, y_1)$ and $(\boldsymbol{x}_2, y_2)$, such that

\begin{enumerate}
    \setlength{\itemsep}{0pt} %
    \item $\binom{\boldsymbol{x}_1}{y_1} \perp \binom{\boldsymbol{w}_1}{-1}$
    \item $\left\langle \binom{\boldsymbol{x}_1}{y_1}, \binom{\hat{\boldsymbol{w}}}{-1} \right\rangle \ge 2\sqrt{C}$
    \item $\binom{\boldsymbol{x}_2}{y_2} \perp \binom{\boldsymbol{w}_2}{-1}$
    \item $\boldsymbol{x}_1, \boldsymbol{x}_2$ are not co-linear.
\end{enumerate}

Let $\mathcal{D}_1 = \{(\boldsymbol{x}_1, y_1)\}$ and $\mathcal{D}_2 = \{(\boldsymbol{x}_2, y_2)\}$. This construction ensures that 
\begin{align*}
    \ell(\mathcal{D}_1, f_1) &= \frac{1}{2}\|\boldsymbol{x}_1^\top \boldsymbol{w}_1 - y_1\|^2 = 0, \\
    \ell(\mathcal{D}_2, f_2) &= \frac{1}{2}\|\boldsymbol{x}_2^\top \boldsymbol{w}_2 - y_2\|^2 = 0, \\
    \ell(\mathcal{D}_1 \cup \mathcal{D}_2, \hat{f}) &\ge 
    \frac{1}{4}\|\boldsymbol{x}_1^\top \hat{\boldsymbol{w}} - y_1\|^2 = C. 
\end{align*}
On the other hand, since $\boldsymbol{x}_1, \boldsymbol{x}_2$ are not co-linear, one can find $\boldsymbol{w}^*$ such that $\boldsymbol{x}_1^\top \boldsymbol{w}^* - y_1 = \boldsymbol{x}_2^\top \boldsymbol{w}^* - y_2 = 0$, as long as $d$ is large enough. That is, we have 
\begin{align*}
    \ell(\mathcal{D}_1 \cup \mathcal{D}_2, f^*) = 0.
\end{align*}
This completes the proof. 

\end{proof}

\thmb*

\begin{proof}
    Consider a two-class linear-separable data classification problem in a $d$-dimensional space. Each data point $\boldsymbol{z}$ consists of input $\boldsymbol{x}\in\mathbb{R}^d$, label $y\in \{1, -1\}$. The hypothesis class is linear models, $\mathcal{F} = \{f=(\boldsymbol{w}, b): \boldsymbol{w}\in\mathbb{R}^d\backslash\{\mathbf{0}\}, b \in \mathbb{R}, f(\boldsymbol{x}) = \boldsymbol{w}^\top \boldsymbol{x} + b\}$, where we exclude $\boldsymbol{w} = \mathbf{0}$ to avoid degeneracy.
    For linear model $f=(\boldsymbol{w}, b)$ and data point $z=(\boldsymbol{x},y)$, the loss function is defined as $\ell(\boldsymbol{z}, f) = \max\{-y(\boldsymbol{w}^\top \boldsymbol{x}+b), 0\}$. For a dataset $\mathcal{D}$, the loss function is defined as $\ell(\mathcal{D}, f) = \text{Avg}(\ell(\boldsymbol{z}, f))$.
    The algorithm $\mathcal{L}$ outputs the $\ell_2$ normalized max-margin classifier of training set with $\|\boldsymbol{w}\|_2 = 1$, which covers a wide range of practical algorithms including SVM and gradient descent algorithms that have max-margin implicit bias. 

    Let $d=2$. Consider four data points 
    \begin{align*}
        &\boldsymbol{z}_1 = (\boldsymbol{x}_1, y_1) = ((1, 0), 1),\quad \boldsymbol{z}_2 = (\boldsymbol{x}_2, y_2) = ((-1, 0), -1),\\
        &\boldsymbol{z}_3 = (\boldsymbol{x}_3, y_3) = ((0, 1), 1),\quad \boldsymbol{z}_4 = (\boldsymbol{x}_4, y_4) = ((0, -1), -1),
    \end{align*}
    Let $\mathcal{D}_1 = \{\boldsymbol{z}_1, \boldsymbol{z}_2\}$, $\mathcal{D}_2 = \{\boldsymbol{z}_3, \boldsymbol{z}_4\}$.

    It is easy to see that 
    \begin{align*}
        &\mathcal{L}(\{\boldsymbol{z}_1, \boldsymbol{z}_2\}) = f_1 = ((1, 0), 0),\\
        &\mathcal{L}(\{\boldsymbol{z}_3, \boldsymbol{z}_4\}) = f_2 = ((0, 1), 0),
    \end{align*}

    Consider merging $f_1$ and $f_2$.
    Let $\hat{f} = \mathcal{M}(f_1, f_2) = ((\hat{w}_1, \hat{w}_2), \hat{b})$. Next we show that there exist $p, q$, such that 

    \begin{enumerate}
    \setlength{\itemsep}{0pt} %
    \item $\hat{w}_1 p + \hat{w}_2 q + \hat{b} < -5C$,
    \item $p>1$ or $q>1$.
    \end{enumerate}
    We prove the fact by contradiction. If this claim does not hold, we know that for any $p>1, q\in\mathbb{R}$ and any $q>1, p\in\mathbb{R}$,  we have $\hat{w}_1 p + \hat{w}_2 q + \hat{b} \ge -5C$. This will give $\hat{w}_1 = \hat{w_2} = 0$, which contradicts the degeneracy of $(\hat{w}_1, \hat{w}_2)$.

    Let $\boldsymbol{z}_5 = (\boldsymbol{x}_5, y_5) = ((p, q), 1)$. 
    If $p>1$, we define 
    \begin{align*}
        \mathcal{D}_1 = \{\boldsymbol{z}_1, \boldsymbol{z}_2, \boldsymbol{z}_5\}, \mathcal{D}_2 = \{\boldsymbol{z}_3, \boldsymbol{z}_4\}.
    \end{align*}
    If $q>1$, we define 
    \begin{align*}
        \mathcal{D}_1 = \{\boldsymbol{z}_1, \boldsymbol{z}_2\}, \mathcal{D}_2 = \{\boldsymbol{z}_3, \boldsymbol{z}_4, \boldsymbol{z}_5\}.
    \end{align*}

    This construction ensures that 
    \begin{enumerate}
    \setlength{\itemsep}{0pt} %
    \item $\mathcal{L}(\mathcal{D}_1) = f_1$, $\ell(\mathcal{D}_1, f_1) = 0$, 
    \item $\mathcal{L}(\mathcal{D}_2) = f_2$, $\ell(\mathcal{D}_2, f_2) = 0$,
    \item $\ell(\mathcal{D}_1\cup \mathcal{D}_2, \hat{f})\ge -\frac{1}{5}\left(\hat{w}_1 p + \hat{w}_2 q + \hat{b}\right)> C$.
    \end{enumerate}

    On the other hand, it is easy to see that $\mathcal{D}_1\cup \mathcal{D}_2$ is linearly separable, due to the condition that $p>1$ or $q>1$. Therefore, the max-margin classifier $f^* = \mathcal{L}(\mathcal{D}_1\cup \mathcal{D}_2)$ satisfies
    \begin{align*}
        \ell(\mathcal{D}_1\cup \mathcal{D}_2, f^*) = 0.
    \end{align*}
    This completes the proof. 
\end{proof}

\begin{remark}
    The loss function in the proof does not reflect the classification accuracy. To take this into consideration, 
    we can inject arbitrary number of adversarial data points like $\boldsymbol{z}_5$, to make the classification accurate arbitrarily low.
\end{remark}
    \section{Experimental Setup}
\label{apz:exp_setup}

\subsection{Dataset Details}
\label{apx:dataset}
For vision tasks, we follow the initial practice of~\citep{ilharco2022editing} and build a vision benchmark consisting of eight datasets, including MNIST~\citep{lecun2010mnist}, EuroSAT~\citep{helber2019eurosat}, GTSRB~\citep{stallkamp2011german}, SVHN~\citep{netzer2011reading}, DTD~\citep{cimpoi2014describing}, RESISC45~\citep{cheng2017remote}, Stanford Cars~\citep{krause20133d}, SUN397~\citep{xiao2016sun}. 

For natural language understanding~(NLU) tasks, we follow the practice in~\citet{yu2024language} and use eight datasets from the GLUE benchmark~\citep{wang2018glue}, 
including CoLA~\citep{warstadt2018neural}, SST-2~\citep{socher2013recursive}, MRPC~\citep{dolan2005automatically}, STS-B~\citep{cer2017semeval}, QQP~\citep{iyer2017first}, MNLI~\citep{williams2017broad}, QNLI~\citep{wang2018glue, rajpurkar2016squad}, RTE~\citep{wang2018glue, dagan2005pascal,bar2006second,giampiccolo2007third,bentivogli2009fifth}.
We evaluate performance using accuracy for SST-2, QNLI, and RTE, matched accuracy for MNLI, the Matthews correlation coefficient for CoLA, the average of accuracy and F1 score for MRPC and QQP, and the average of Pearson and Spearman correlations for STS-B.

For natural language generation~(NLG) tasks, we follow the practice in~\citep{yu2024extend} and use WizardLM-13B~\citep{xu2023wizardlm}, WizardMath-13B~\citep{luo2023wizardmath}, llama-2-13b-code-alpaca~\citep{chaudhary2023code} as Instruct, Math and Code expert models, respectively. Note that WizardLM-13B model also has code generation abilities, and we include code benchmarks in its evaluation. 

We use five datasets for evaluation, including AlpacaEval 2.0~\citep{dubois2024length}, GSM8K~\citep{cobbe2021training}, MATH~\citep{hendrycks2020measuring}, HumanEval~\citep{chen2021evaluating}, and MBPP~\citep{austin2021program}. AlpacaEval 2.0 measures performance using the win rate, defined as the proportion of instances where a more advanced large language model, specifically GPT-4 Turbo in this study, prefers the outputs of the target model over its own. GSM8K and MATH use zero-shot accuracy as the metric. HumanEval and MBPP use pass@1 as the metric, representing the proportion of individually generated code samples that successfully pass the unit tests. In addition to reporting a simple average metric, we also provide the normalized average metric, calculated by dividing the metric of the merged model by the metric of fine-tuned models, to ensure a fair and consistent comparison across datasets.

\subsection{Descriptions of Baselines}
\label{apx:baselines}

We evaluate several baseline methods for merging Vision Transformers (ViT) and encoder-based language models, which are outlined below:
\begin{itemize}
    \item \textbf{Task Arithmetic}~\citep{ilharco2022editing}: This approach constructs task vectors from model weights and merges them using arithmetic operations.

    \item \textbf{Fisher Merging}~\citep{matena2022merging}: This method uses Fisher-weighted averaging of model parameters to merge models.

    \item \textbf{RegMean}~\citep{jin2022dataless}: RegMean determines merging coefficients by solving a linear equation to align internal embeddings, which shares similarity with \texttt{ProDistill}. One of the key differences is that RegMean relies on the linearity of the modules, and therefore can only be applied to linear modules. Our method is free of this limitation, since it is a general distillation algorithm. Another primary difference from our method is that RegMean is training-free, whereas \texttt{ProDistill} is training-based. This difference leads to several consequences. 
    \begin{itemize}
        \item Performance: The training-free nature of RegMean limits its ability to fully leverage the expressive power of neural network modules, resulting in lower performance compared to the training-based \texttt{ProDistill}. 
        \item Computation: Although RegMean avoids the potential overhead of a lengthy training process, it still requires solving linear equations, which can become computationally intensive as model sizes increase. As demonstrated in Section~\ref{sec: compute efficiency}, with just one epoch of training, \texttt{ProDistill} achieves better merging results than RegMean, without incurring significant computational costs. Therefore, \texttt{ProDistill} does not suffer from high computational burden despite being a training-based method.

    \end{itemize}
    \item \textbf{AdaMerging}~\citep{yang2023adamerging}: This method adaptively selects merging coefficients by minimizing entropy on unlabeled test data. Like \texttt{ProDistill}, it is both training-based and data-dependent.

    \item \textbf{Localize-and-Stitch}~\citep{he2024localize}: This approach identifies sparse masks to extract task-specific parameters from fine-tuned models and merges them back into the pretrained model.
\end{itemize}

We consider the following additional baselines for merging decoder-based large language models. 
\begin{enumerate}
    \item \textbf{TIES-Merging}~\citep{yadav2024ties}:  TIES-Merging first trims task vectors to retain only the most significant parameters, then resolves the signs of the remaining parameters, and finally merges only those parameters that align with the resolved signs.
    \item \textbf{WIDEN}~\citep{yu2024extend}: WIDEN disentangles model weights into magnitude and direction components and merges them by considering their respective contributions. It is also applicable to merging independently trained models.
\end{enumerate}

\subsection{Implementation Details}
\label{apx:impl}

We use the ViT checkpoints given by~\citep{ilharco2022editing} for vision tasks. 
For NLU tasks, we fine-tune BERT-base-uncased and RoBERTa-base models for $10$ epochs. The weight decay is set to $0.01$. We use a learning rate of $10^{-5}$ with a warm-up strategy.

For \texttt{ProDistill} and AdaMerging, we train the merging coefficients using the Adam optimizer. The merging coefficients are initialized to 0.3 when merging eight models, and searched from $\{0.5, 1.0\}$ when merging two models.
The learning rate and training epochs are selected via grid search.
For ViT models and LLMs, the learning rate is chosen from $\{0.1, 0.01\}$; for Bert/RoBERTa models, 
the learning rate is chosen from $\{0.01, 0.001\}$.
The number of epochs is chosen from $\{50, 100, 200\}$.  

For Task Arithmetic, we use a fixed merging coefficient of 0.3 when merging eight models, and search the coefficient from $\{0.5, 1.0\}$ when merging 2 models. For RegMean, the scaling ratio to reduce its diagonal terms are selected from a grid of $\{0.7, 0.8, 0.9, 1.0\}$. For Fisher Merging, the scaling coefficient is selected from a grid of $\{0.1, 0.3, 0.5, 0.7, 0.9, 1.0\}$.
For Localize-and-Stitch, we set sigmoid bias to 0.3, learning rate to $10^7$, $\ell_1$ regularization factor to 10.0, sparsity level to $1\%$ and epochs to 10.  

For Vision and NLU tasks, the few-shot validation set is randomly sampled from the training set. 
For NLG tasks, the validation set is randomly sampled from the test set of AlpacaEval 2.0, GSM8K and MBPP, and we exclude these test data points in evaluation. 
    \section{Ablation studies}
\label{apx: ablations}

\subsection{Analyses of Merging Coefficient Granularity}
\label{apx: granularity}
This section analyzes the impact of different granularities on the merging coefficients.  We consider three types of granularities:
\begin{enumerate} 
\item \textbf{Element-wise:} Each element in the model's weights corresponds to a merging coefficient. In other words, the merging coefficients are tensors that have the same dimensions as the model parameters. 
\item \textbf{Layer-wise:} Each layer of the model is assigned a scalar merging coefficient. The merging coefficient tensor, therefore, has the same number of elements as the number of model layers. In this paper, we slightly deviate from this naming convention by assigning a coefficient to each module in the network. 
\item \textbf{Task-wise:} Each fine-tuned model has a scalar merging coefficient. 
\end{enumerate}

We conduct experiments using these three types of granularities, and present the results in Table~\ref{tab: granularities}. For our method, the element-wise granularity yields the best performance, owing to its higher expressive power. In contrast, for AdaMerging, the element-wise granularity performs worse than the layer-wise granularity. We hypothesize that this is due to the weak supervision power of entropy-based objective function in AdaMerging, which may lead to overfitting when the number of trainable parameters is increased.

\begin{table}[!h]
\centering
\begin{tabular}{cc|c c}
\toprule
Val Shot       & Granularities       & Ours   & AdaMerging \\ 
\midrule
\multirow{3}{*}{16} & Element-wise & \cellcolor{lightyellow}82.79 & 72.35     \\ 
& Layer-wise   & 73.81 & \cellcolor{lightyellow}75.92     \\ 
& Task-wise    & --      & 71.38     \\ 
\midrule
\multirow{3}{*}{32} & Element-wise & \cellcolor{lightyellow}84.49 & 72.49     \\ 
& Layer-wise   & 73.50 & \cellcolor{lightyellow}78.28     \\ 
& Task-wise    & --      & 71.76     \\ 
\midrule
\multirow{3}{*}{64} & Element-wise & \cellcolor{lightyellow}86.04 & 73.11     \\ 
& Layer-wise   & 72.96 & \cellcolor{lightyellow}79.28     \\ 
& Task-wise    & --      & 71.69     \\ 
\bottomrule
\end{tabular}
\hspace{1cm} %
\begin{tabular}{cc|c c}
\toprule
Val Shot       & Granularities       & Ours   & AdaMerging \\ 
\midrule
\multirow{3}{*}{16} & Element-wise & \cellcolor{lightyellow}0.6980 & 0.5746     \\ 
& Layer-wise   & 0.6340 & \cellcolor{lightyellow}0.6398     \\ 
& Task-wise    & -- & 0.6406     \\ 
\midrule
\multirow{3}{*}{32} & Element-wise & \cellcolor{lightyellow}0.7473 & 0.5465     \\ 
& Layer-wise   &  0.5996 & \cellcolor{lightyellow}0.6402     \\ 
& Task-wise    & -- & 0.6403     \\ 
\midrule
\multirow{3}{*}{64} & Element-wise & \cellcolor{lightyellow}0.7641 & 0.5415     \\ 
& Layer-wise   & 0.5560 & \cellcolor{lightyellow}0.6332     \\ 
& Task-wise    & -- & 0.6379     \\ 
\bottomrule
\end{tabular}
\caption{\textbf{Impact of Granularity on Merging Methods.} Left: Accuracy results on 8 vision benchmarks using ViT-B-32. Right: Performance metrics on the NLU tasks using RoBERTa. The highlighted cells indicate the configurations used in this paper.}
\label{tab: granularities}
\end{table}

\subsection{Comparison with \texttt{DistillMerge} }
\label{apx: end2end}
In this section, we compare the \texttt{ProDistill} algorithm, which uses the progressive training approach, with its unoptimized version \texttt{DistillMerge}, which uses the traditional end-to-end training approach and optimizes Objective~\ref{eq:obj_naive} directly. To ensure a fair comparison, both approaches are evaluated using the same computational budget.
The results, presented in Figure~\ref{fig: end2end}, indicate that \texttt{ProDistill} achieves a better overall performance compared to \texttt{DistillMerge}. Therefore, progressive training also has a performance advantage in the considered setup,  in addition to its memory efficiency as shown in Section~\ref{sec: memory efficiency}.

\begin{figure*}[!h]
    \centering
    \begin{minipage}{0.45\textwidth}
        \centering
        \includegraphics[width=\linewidth]{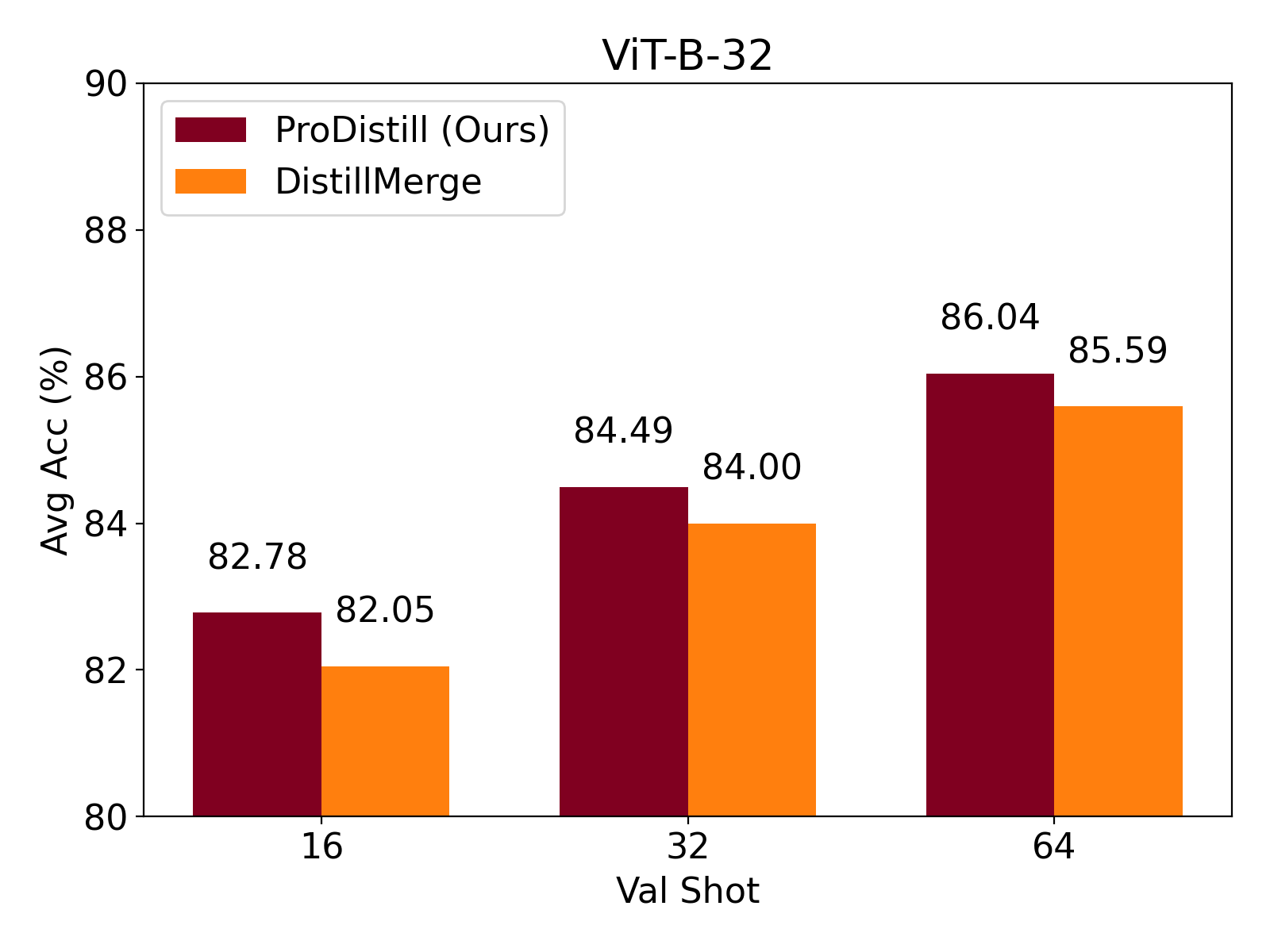}
    \end{minipage}%
    \hspace{0.0\textwidth} 
    \begin{minipage}{0.45\textwidth}
        \centering
        \includegraphics[width=\linewidth]{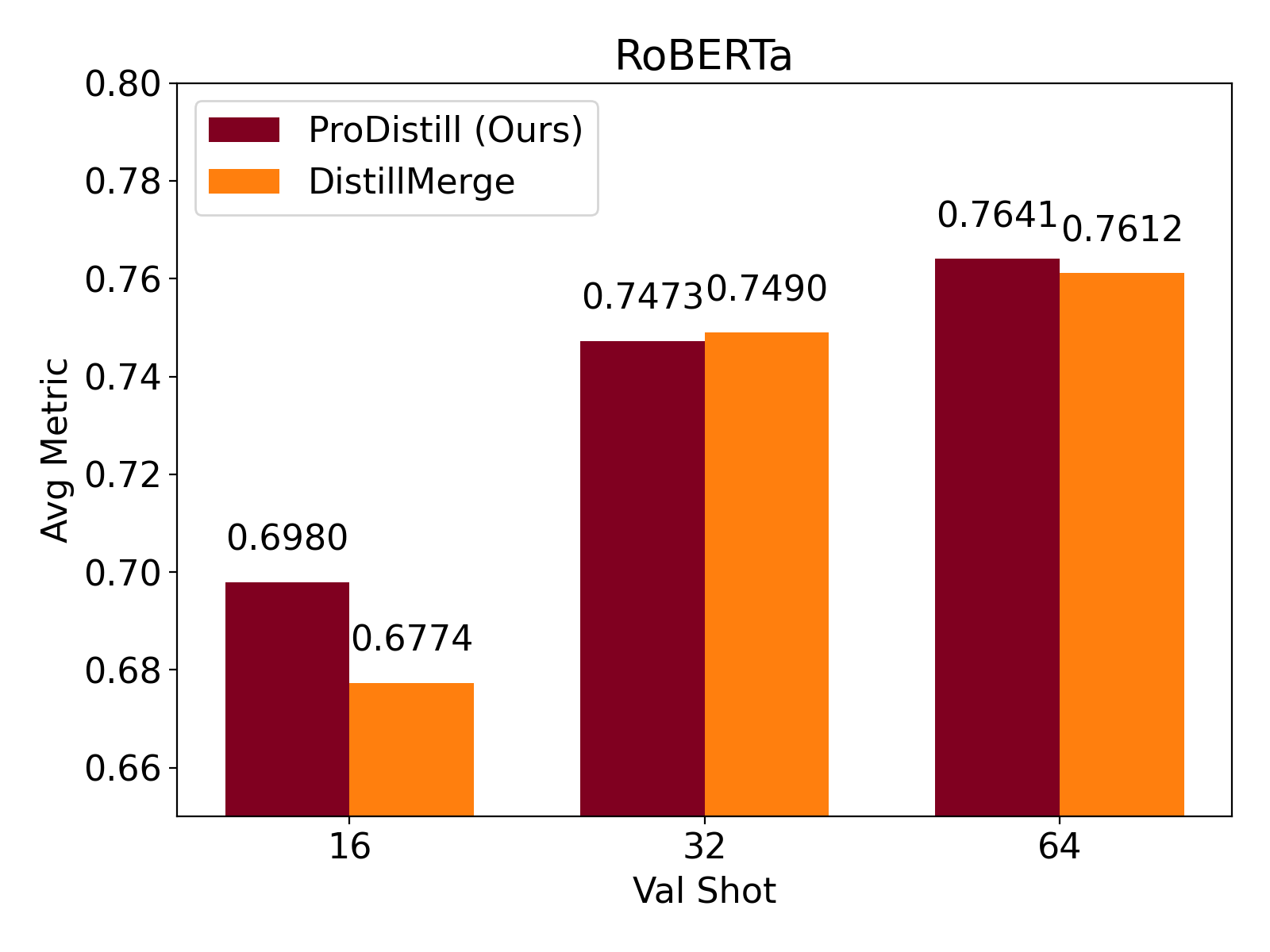}
    \end{minipage}
    \caption{\textbf{Comparison between \texttt{ProDistill} and \texttt{DistillMerge}.} Left: Accuracy results on 8 vision benchmarks using ViT-B-32. Right: Performance metrics on the NLU tasks using RoBERTa. The results demonstrate the performance improvement of progressive training in \texttt{ProDistill}, compared to end-to-end training in \texttt{DistillMerge}, despite the latter being more resource-intensive.}
    \label{fig: end2end}
\end{figure*}

\subsection{Comparisons with Standard Training and Standard Distillation}
\label{apx: direct train}

\begin{figure}[!h]
    \centering
    \includegraphics[width=0.5\linewidth]{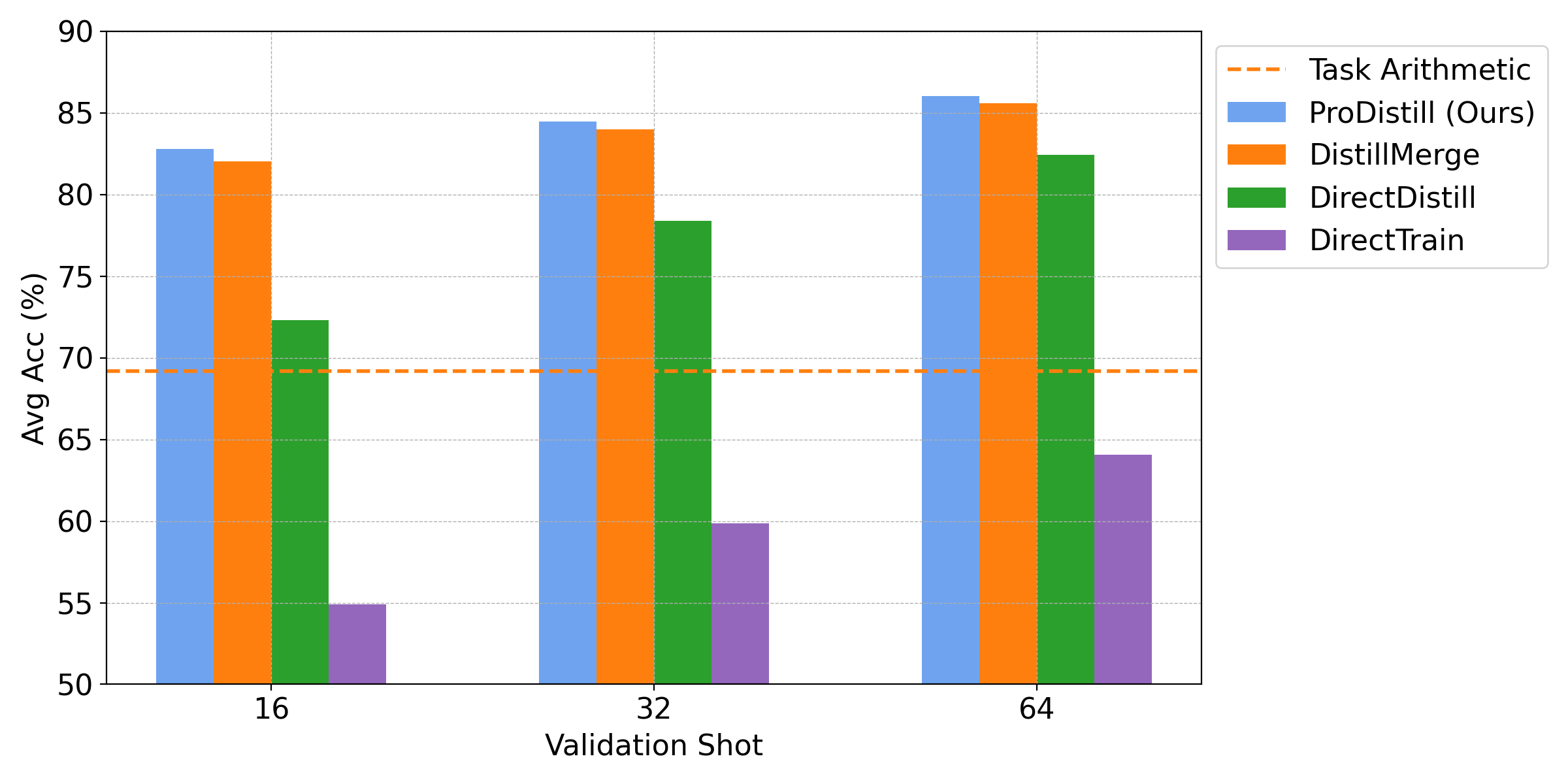}
    \caption{\textbf{Ablation Results of \texttt{DirectTrain} and \texttt{DirectDistill}.} Traditional supervised training in \texttt{DirectTrain} performs poorly in the few-shot setup. The standard feature-based distillation method, \texttt{DirectDistill}, outperforms task arithmetic but still lags behind \texttt{ProDistill} and \texttt{DistillMerge}, which incorporate task vector-based weight scaling.}
    \label{fig:direct distill}
\end{figure}
We consider two additional baselines. The first, referred to as \texttt{DirectTrain}, follows a standard supervised training approach. It assumes access to class labels and minimizes the standard cross-entropy loss:
\begin{align*}
    \min_{\theta} \sum_{i=1}^T \frac{1}{2T|\mathcal{D}_i|}\sum_{(x, y)\in\mathcal{D}_i}\mathcal{L}_{CE}\left(\psi(\theta, x), y\right),
\end{align*}
where $\psi(\cdot, \cdot)$ is the model output logits. 

The second baseline, \texttt{DirectDistill}, is conceptually similar to \texttt{DistillMerge}, but it does not utilize task vectors to scale the model weights. Mathematically, it is expressed as:

\begin{align*}\min_{\theta}\sum_{i=1}^{T}\frac{1}{2T|\mathcal{D}_{i}|}\sum_{x\in\mathcal{D}_i}
\nonumber\left\|\varphi\left(\theta, x\right) -\varphi\left(\theta_{i},x\right)\right\|^{2}.\end{align*} 

The relationship between these algorithms can be visualized as follows:

\begin{align*}
    \texttt{DirectTrain}\xrightarrow{\text{Distillation Loss}} \texttt{DirectDistill}\xrightarrow{\text{Task Vector Scaling}} \texttt{DistillMerge} \xrightarrow{\text{Layer-wise Training}} \texttt{ProDistill}
\end{align*}

We evaluate \texttt{DirectTrain} and \texttt{DirectDistill} on vision tasks using ViT-B-32. Compared to the main experimental setup, we use a smaller learning rate grid of \{\num{1e-5}, \num{1e-6}, \num{1e-7}\}. The results, summarized in Figure~\ref{fig:direct distill}, reveal several key findings:
\begin{enumerate} 
\item Domain-specific data is crucial for model merging. With only 16-shot validation data, a vanilla distillation algorithm like \texttt{DirectDistill} can outperform Task Arithmetic. 
\item The internal embeddings of teacher models provide significantly richer supervision signals compared to class labels alone, as reflected in the improved performance of \texttt{DirectDistill} over \texttt{DirectTrain}. 
\item Scaling model weights using task vectors introduces an effective prior for model training, as reflected in the improved performance of \texttt{ProDistill} over \texttt{DirectDistill}. 
\end{enumerate}

\subsection{Ablation Studies on Layer Inputs}
\label{apx: layer input}

\begin{figure}[!h]
    \centering
    \includegraphics[width=1\linewidth]{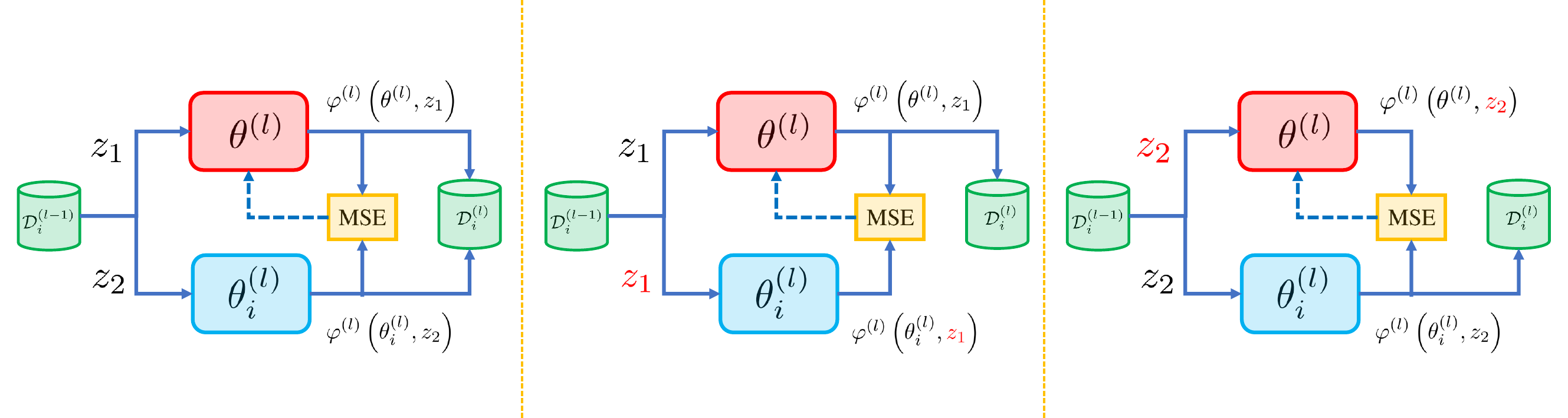}
    \caption{Three configurations of layer inputs. \textbf{Left:} Dual activations~(ours). \textbf{Middle:} Merged Activations. \textbf{Right:} Fine-tuned Activations. $z_1$ is activation of merged model and $z_2$ is the activation of fine-tuned model.}
    \label{fig:three_dataloader}
\end{figure}

\begin{table}[t]
\centering
\begin{tabular}{c|cc}
\toprule
Input Type & ViT-B-32 & RoBERTa \\
\midrule
\rowcolor{lightyellow}
Dual~(in Alg~\ref{alg:model_merging}) & \textbf{86.04} & \textbf{0.7641} \\
Merged & 85.20~(-1.0\%) & 0.7303~(-4.4\%) \\
Fine-tuned & 84.98~(-1.2\%) & 0.5895~(-22.8\%) \\
\bottomrule
\end{tabular}
\caption{Performance of three different layer input configurations. The dual activations approach used in Algorithm~\ref{alg:model_merging} achieves the highest performance, especially for NLU tasks. }
\label{tab:three_loader}
\end{table}

\texttt{ProDistill} maintains dual paths of internal activations as inputs to each layer: the activations from the merged models and those from the fine-tuned models. In this section, we explore two alternative configurations: one using only fine-tuned activations and the other using only merged activations. 
The goal in these two configurations is to align the layer output \textit{under the same input}, and can be viewed as direct adaptations of traditional distillation loss to the layer-wise setting. 

We conduct additional experiments on these three setups, and present the results in Table~\ref{tab:three_loader}. The findings demonstrate a significant performance advantage of using dual activations over the two alternatives, particularly on the NLU tasks. 

The performance gap can be attributed to the limited expressive power of a single layer, which makes it impossible to fully optimize Equation~\ref{eq:obj}. Otherwise, if the loss in Equation~\ref{eq:obj} were zero, the three input configurations would become equivalent.  Therefore, the proposed dual input strategy facilitates the \textit{progressive} alignment of layer features by distributing the minimization of the distillation loss across all layers.
    \section{Additional Experiment Results}
\label{apx: additional exps}

\subsection{Additional Results on Vision, NLU, and NLG Tasks}
\label{apx: more results}
This section presents the complete experimental results across various base models and tasks, including merging ViT-B-16 (Table~\ref{tab:vitb16}), ViT-L-14 (Table~\ref{tab:vitl14}) on vision tasks, merging BERT-Base (Table~\ref{tab:bert}) on NLU tasks, merging LLMs on Instruct+Code (Table~\ref{tab: instruct_code}) and Instruct+Math (Table~\ref{tab: instruct_math}) tasks. The results demonstrate the consistent performance improvements achieved by our method across models of varying model sizes and different experiment setups.

\begin{table*}[!h]
\setlength{\tabcolsep}{4pt}
\centering
\caption{\textbf{Performance of merging ViT-B-16 models across eight downstream vision tasks.}}
\label{tab:vitb16}   
\begin{tabular}{l|cccccccc|c}
\toprule
\textbf{Method} &\textbf{SUN397}& \textbf{Cars}& \textbf{RESISC45}& \textbf{EuroSAT}& \textbf{SVHN}& \textbf{GTSRB}& \textbf{MNIST}& \textbf{DTD} &\textbf{Avg}  \\
\midrule
Individual & 78.56 & 87.08 & 96.92 & 99.78 & 97.86 & 99.17 & 99.76 & 82.07 & 92.65 \\
Task Arithmetic  & 62.07 & 66.14 & 74.00 & 76.48 & 88.02 & 73.79 & 98.52 & 52.50 & 73.94 \\
\midrule
RegMean & 70.84 & 75.18 & 83.13 & 94.44 & 90.80 & 82.43 & 98.66 & 60.74 & 82.03\\
Fisher merging & 66.78 & 70.49 & 72.17 & 80.19 & 88.33 & 68.14 & 96.60 & 48.46 & 73.89 \\
Localize-and-Stich & 67.38 & 69.23 & 82.38 & 90.37 & 88.84 & 83.58 & 97.24 & 74.10 & 81.64 \\
AdaMerging & 64.30 & 74.37 & 74.63 & 94.89 & 91.19 & 94.94 & 97.95 & 69.63 & 82.74 \\
\rowcolor{lightyellow}
\texttt{ProDistill}, 16 shot (Ours) & 71.47 & 79.99 & 88.06 & 96.15 & 96.37 & 93.52 & 99.58 & 65.00 & 86.27 \\
\rowcolor{lightyellow}
\texttt{ProDistill}, 32 shot (Ours) & 71.77 & 80.86 & 89.48 & 99.07 & 96.86 & 96.29 & \textbf{99.63} & 68.40 & 87.80 \\
\rowcolor{lightyellow}
\texttt{ProDistill}, 64 shot (Ours) & \textbf{72.82} & \textbf{81.94} & \textbf{91.94} & \textbf{99.52} & \textbf{97.11} & \textbf{97.65} & 99.60 & \textbf{70.74} & \textbf{88.92} \\
\bottomrule
\end{tabular}
\end{table*}

\begin{table*}[!h]
\setlength{\tabcolsep}{4pt}
\centering
\caption{\textbf{Performance of merging ViT-L-14 models across eight downstream vision tasks.}}
\label{tab:vitl14}   
\begin{tabular}{l|cccccccc|c}
\toprule
\textbf{Method} &\textbf{SUN397}& \textbf{Cars}& \textbf{RESISC45}& \textbf{EuroSAT}& \textbf{SVHN}& \textbf{GTSRB}& \textbf{MNIST}& \textbf{DTD} &\textbf{Avg}  \\
\midrule
{Individual} & 82.32 & 92.35 & 97.38 & 99.78 & 98.11 & 99.24 & 99.69 & 84.15 & 94.13 \\
Task Arithmetic   & 74.16 & 82.09 & 86.67 & 94.07 & 87.91 & 86.77 & 98.94 & 65.69 & 84.54 \\
\midrule
RegMean & 74.04 & 87.22 & 88.52 & 98.15 & 92.89 & 90.22 & 99.27 & 69.84 & 87.52\\
Fisher merging &71.28 & 85.18 & 81.59 & 89.67 & 81.51 & 83.39 & 96.31 & 65.48 & 81.80 \\
Localize-and-Stich & 74.37 & 78.03 & 86.02 & 94.56 & 93.44 & 92.52 & 98.45 & 74.89 & 86.53\\
AdaMerging& 75.96 & 89.42 & 90.08 & 96.59 & 91.78 & 97.52 & 98.91 & 77.61 & 89.73 \\ 
\rowcolor{lightyellow}
\texttt{ProDistill}, 16 shot (Ours) & 76.71 & 89.23 & 92.63 & 98.15 & 97.12 & 95.28 & 99.60 & 75.00 & 90.47 \\
\rowcolor{lightyellow}
\texttt{ProDistill}, 32 shot (Ours) & 77.26 & 89.55 & 93.40 & 99.26 & 97.58 & 97.17 & 99.60 & 76.54 & 91.30 \\
\rowcolor{lightyellow}
\texttt{ProDistill}, 64 shot (Ours) & \textbf{77.73} & \textbf{90.04} & \textbf{94.43} & \textbf{99.48} & \textbf{97.71} & \textbf{98.26} & \textbf{99.63} & \textbf{78.24} & \textbf{91.94} \\
\bottomrule
\end{tabular}
\end{table*}

\begin{table*}[!h]
\setlength{\tabcolsep}{5pt}
\centering
\caption{\textbf{Performance of merging BERT models on the NLU tasks.}}
\label{tab:bert} 
\begin{tabular}{l|cccccccc|c}
\toprule
\textbf{Method} & \textbf{CoLA} & \textbf{SST-2} & \textbf{MRPC} & \textbf{STS-B} & \textbf{QQP} & \textbf{MNLI} & \textbf{QNLI} & \textbf{RTE} & \textbf{Avg} \\

\midrule
Individual  & 0.5600 & 0.9243 & 0.8171 & 0.8754 & 0.8900 & 0.8402 & 0.9103 & 0.6282 & 0.8057 \\
Task Arithmetic  & 0.0900 & 0.8383 & 0.7960 & 0.4897 & 0.7017 & 0.4919 & 0.6883 & 0.6354 & 0.5914 \\
\midrule
RegMean & 0.3840 & \textbf{0.8842} & 0.7857 & 0.3155 & 0.7772 & 0.4799 & 0.7847 & 0.6173 & 0.6286 \\
Fisher merging & 0.1288 & 0.6995 & 0.6568 & -0.3899 & 0.6698 & 0.3830 & 0.7150 & 0.5632 & 0.4283 \\
Localize-and-Stich & 0.0968 & 0.7878 & \textbf{0.7982} & 0.5645 & 0.6115 & 0.4475 & 0.5418 & 0.5776 & 0.5532 \\
AdaMerging  & 0.2935 & 0.8085 & 0.7877 & \textbf{0.6607} & 0.4020 & 0.4311 & 0.5065 & 0.5235 & 0.5517 \\
\rowcolor{lightyellow}
\texttt{ProDistill}, 16 shot (Ours) & 0.3055 & 0.8704 & 0.7853 & 0.5084 & 0.7788 & 0.5627 & 0.8212 & 0.6101 & 0.6553 \\
\rowcolor{lightyellow}
\texttt{ProDistill}, 32 shot (Ours) & 0.3369 & 0.8727 & 0.7858 & 0.5105 & 0.7782 & 0.6022 & 0.8320 & 0.6029 & 0.6652 \\
\rowcolor{lightyellow}
\texttt{ProDistill}, 64 shot (Ours) & \textbf{0.3881} & 0.8819 & 0.7951 & 0.5203 & \textbf{0.7811} & \textbf{0.6155} & \textbf{0.8413} & \textbf{0.6498} & \textbf{0.6841} \\
\bottomrule
\end{tabular}
\end{table*}

\begin{table*}[!h]
\setlength{\tabcolsep}{4pt}
\centering
\caption{\textbf{Performance of merging LLM models on Instruct and Math tasks.} The result of TIES-Merging and WIDEN is directly taken from~\citet{yu2024extend}.}
\begin{tabular}{l|ccccc|cc}
\toprule
\textbf{Method} & \textbf{AlpacaEval 2.0} & \textbf{GSM8K} & \textbf{MATH} & \textbf{HumanEval} & \textbf{MBPP} & \textbf{Avg} & \textbf{Norm Avg} \\
\midrule
WizardLM-13B & 0.1180 & 0.0220 & 0.0000 & 0.3659 & 0.3400 & 0.1692 & 0.6069\\
WizardMath-13B & 0.0117 & 0.6361 & 0.1456 & 0.0671 & 0.0800 & 0.1881 & 0.5036 \\
\midrule
Task Arithmetic & 0.1015 & \textbf{0.6649} & \textbf{0.1420} & 0.2561 & 0.2960 & 0.2921 & 0.8902\\
TIES-Merging & 0.1007 & 0.1577 & 0.0204 & \textbf{0.3780} & \textbf{0.3560} & 0.2026 & 0.7419 \\
WIDEN & 0.0945 & 0.6634 & 0.1358 & 0.2866  & 0.3040 & \textbf{0.2968} & 0.8907 \\
\rowcolor{lightyellow}
\texttt{ProDistill}, 16 shot (Ours) & 0.1131 & 0.6485 & 0.1358 & 0.2561 & 0.3040 & 0.2915 & 0.9009\\
\rowcolor{lightyellow}
\texttt{ProDistill}, 32 shot (Ours) & 0.1148 & 0.6441 & 0.1356 & 0.2805 & 0.2920 & 0.2934 & \textbf{0.9084}\\
\rowcolor{lightyellow}
\texttt{ProDistill}, 64 shot (Ours) & \textbf{0.1121} & 0.6518 & 0.1360 & 0.2683 & 0.3040 & 0.2944 & 0.9072 \\
\bottomrule
\end{tabular}
\label{tab: instruct_math}
\end{table*}

\begin{table*}[!h]
\centering
\caption{\textbf{Performance of merging LLM models on Instruct and Code tasks.} The result of TIES-Merging and WIDEN is directly taken from~\citet{yu2024extend}.}
\begin{tabular}{l|ccc|cc}
\toprule
\textbf{Method} & \textbf{AlpacaEval 2.0} & \textbf{HumanEval} & \textbf{MBPP} & \textbf{Avg} & \textbf{Norm Avg} \\
\midrule
WizardLM-13B & 0.1180 & 0.3659 & 0.3400  & 0.2746 & 1.0000\\
Llama-2-13b-code-alpaca & 0.0290 & 0.2378 & 0.276 & 0.1809 & 0.5691 \\
\midrule
Task Arithmetic & 0.1035 & 0.3110 & 0.3200 & 0.2448 & 0.8894 \\
TIES-Merging & 0.0727 & 0.000 & 0.000 & 0.0242 & 0.2053 \\
WIDEN & 0.0653 & 0.3170 & \textbf{0.3560} & 0.2461 & 0.8223\\
\rowcolor{lightyellow}
\texttt{ProDistill}, 16 shot (Ours) & 0.1031 & 0.2929 & 0.3140 & 0.2367 & 0.8659 \\
\rowcolor{lightyellow}
\texttt{ProDistill}, 32 shot (Ours) & \textbf{0.1056} & 0.3049 & 0.3036 & 0.2380 & 0.8737 \\
\rowcolor{lightyellow}
\texttt{ProDistill}, 64 shot (Ours) & 0.1042 & \textbf{0.3213} & 0.3232 & \textbf{0.2496} & \textbf{0.9039}  \\
\bottomrule
\end{tabular}
\label{tab: instruct_code}
\end{table*}

\subsection{Additional Results on Data Efficiency}
In this section, we provide further evaluations on the data efficiency of \texttt{ProDistill}.

The results for NLU tasks can be found in Figure~\ref{fig:data efficiency apx}, which show that \texttt{ProDistill} has a performance decline in data-scarce settings. However, it still outperforms Task Arithmetic with as few as 4 data points per class.
Overall, the results demonstrate that \texttt{ProDistill} is highly data-efficient compared with the baselines.

\begin{figure}[!h]
    \centering
    \includegraphics[width=0.4\linewidth]{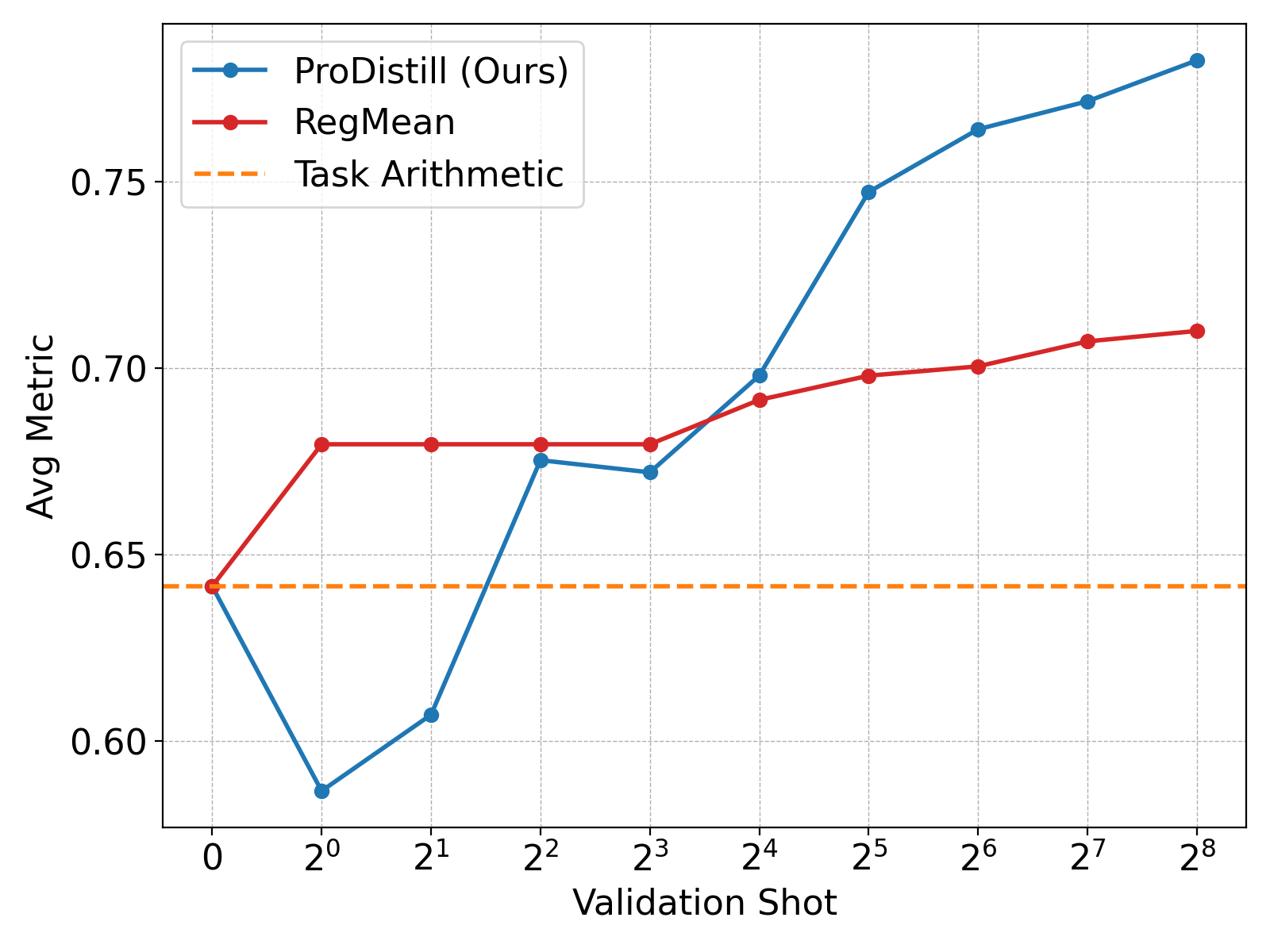}
    \caption{\textbf{The average metric of \texttt{ProDistill} and RegMean on the NLU tasks, with different data availability.} Our method outperforms RegMean in data efficiency when more than 16 validation shots are available. }
    \label{fig:data efficiency apx}
\end{figure}

\subsection{Analysis of Randomness}
\label{apx: randomness}
Although Algorithm~\ref{alg:model_merging} itself is not inherently random, the randomness introduced by the sampling of validation data can influence the results, necessitating a careful analysis.

To evaluate the effect of randomness, we repeat the experiments on ViT-B-32 and RoBERTa using three different random seeds and present the results in the box plot shown in Figure~\ref{fig: randomness}. As expected, the variance in the metrics decreases as the number of validation shots increases. We also find that the ViT experiments show a small sensitivity to randomness. In contrast, for RoBERTa experiments, the impact of randomness can be large in a data-scarce setting, but this effect diminishes considerably once the number of validation shots exceeds a certain threshold (\emph{e.g.}, 16).

\begin{figure*}[!h]
    \centering
    \begin{minipage}{0.45\textwidth}
        \centering
        \includegraphics[width=\linewidth]{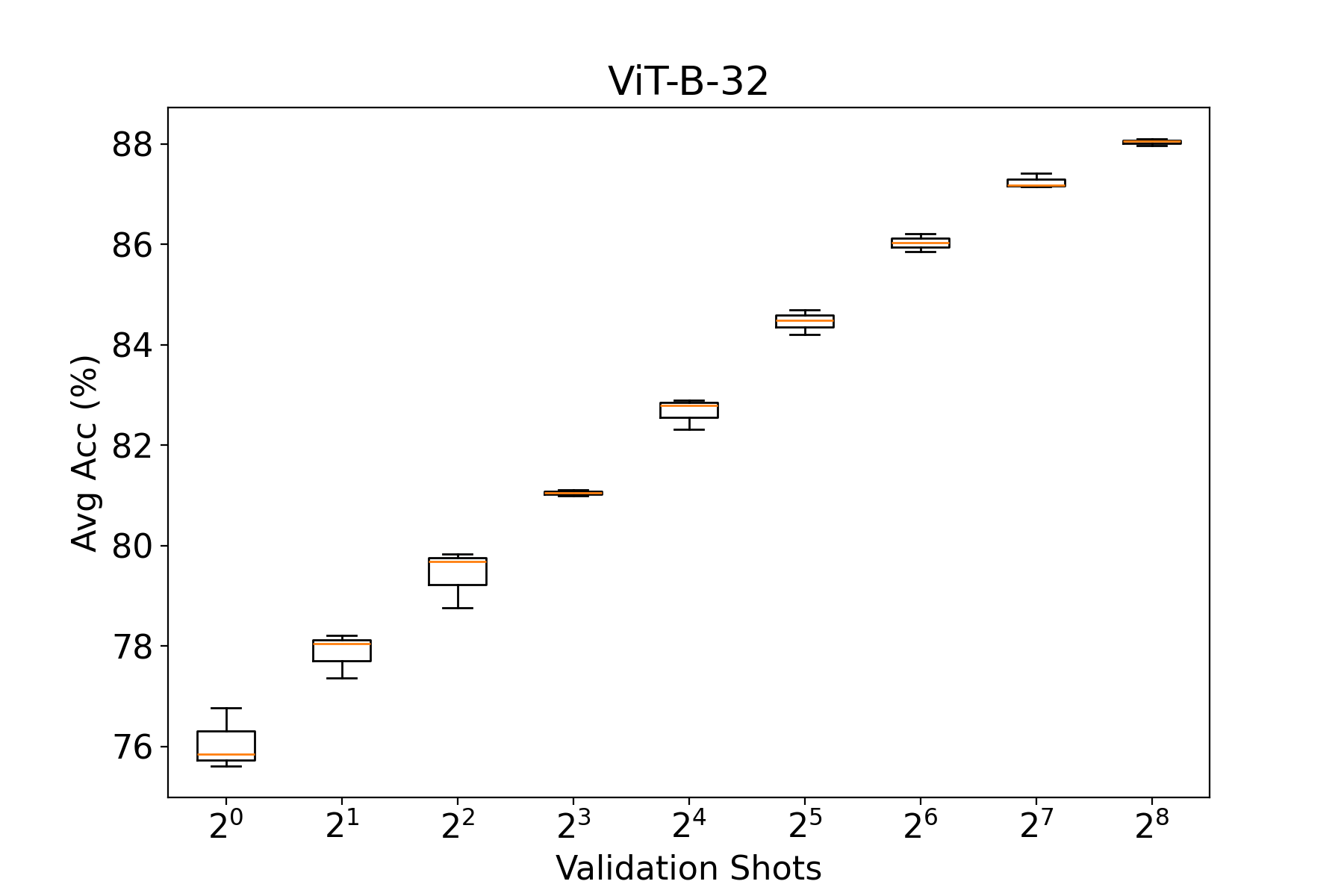}
    \end{minipage}%
    \hspace{0.0\textwidth} 
    \begin{minipage}{0.45\textwidth}
        \centering
        \includegraphics[width=\linewidth]{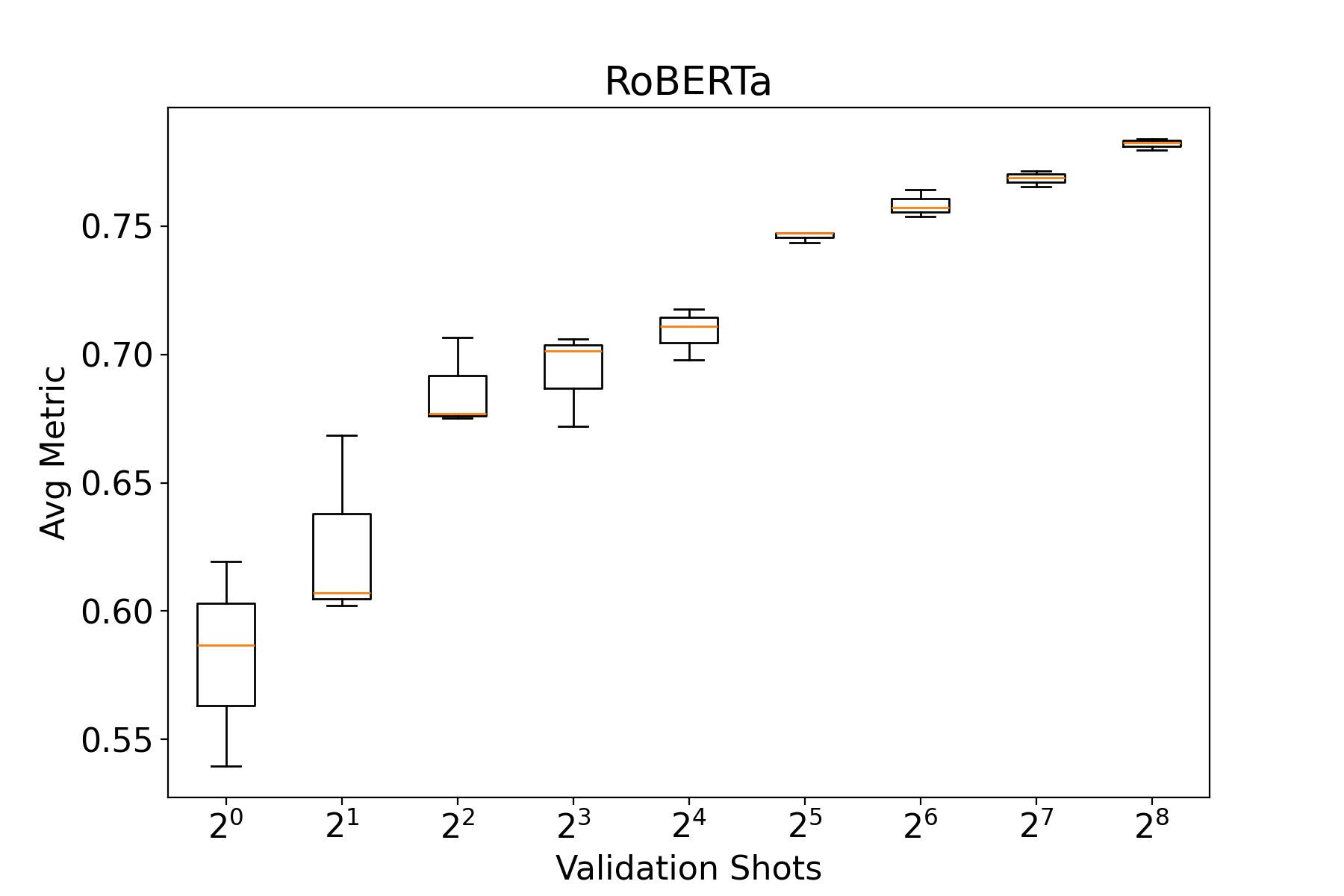}
    \end{minipage}
    \caption{\textbf{The randomness Analysis on Vision and NLU tasks.} Left: Accuracy results of \texttt{ProDistill} on 8 vision benchmarks using ViT-B-32. Right: Performance metrics of \texttt{ProDistill} on the NLU tasks using RoBERTa.}
\label{fig: randomness}
\end{figure*}

\subsection{Additional Results on t-SNE Visualization}
\label{apx: tsne}
We provide the complete results of t-SNE visualization on all eight vision datasets in Figure~\ref{fig: tsne11}. 

\begin{figure}[!h]
    \centering
    \includegraphics[width=1.0\linewidth]{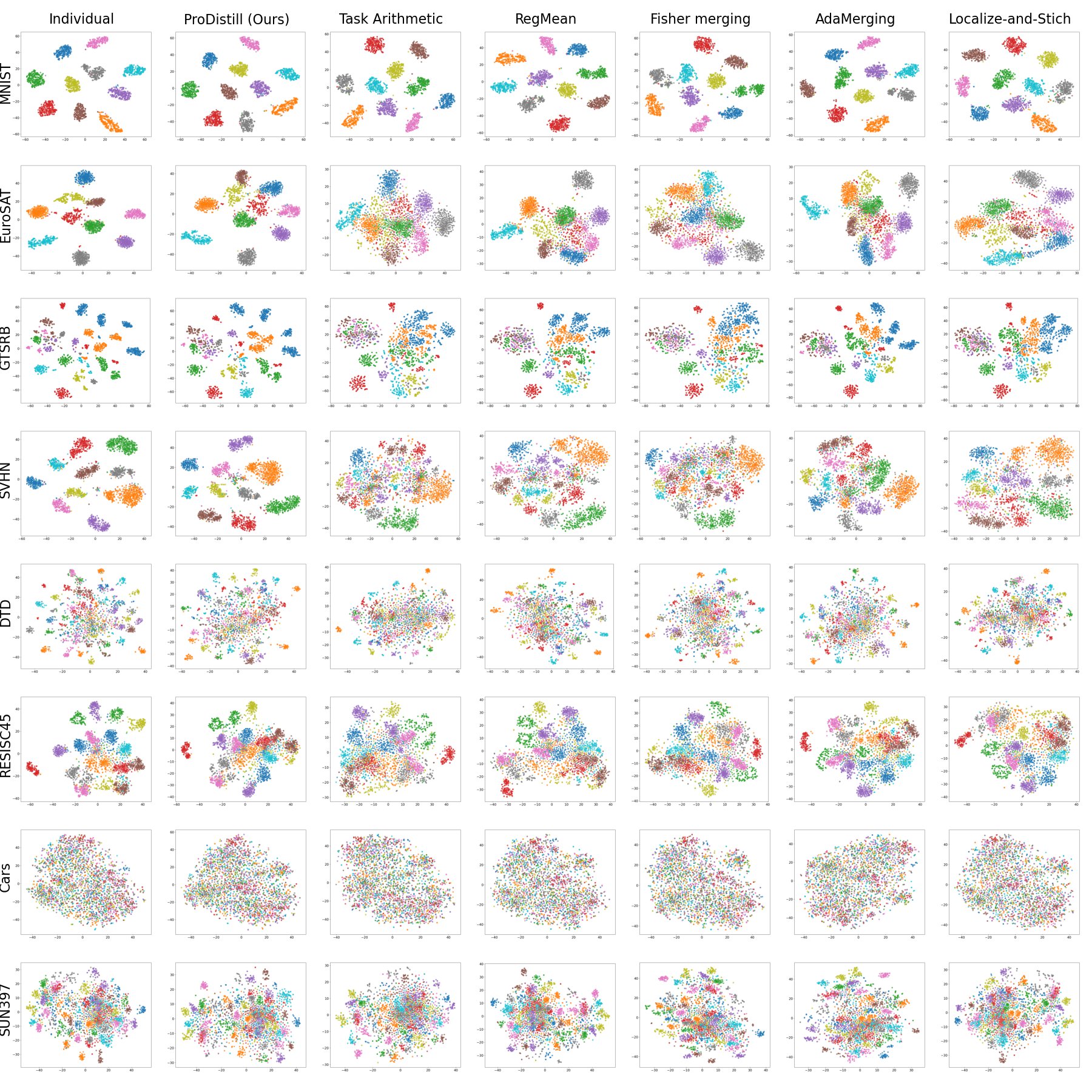}
    \caption{\textbf{The t-SNE visualization of the last layer embedding of ViT-B-32.}}
    \label{fig: tsne11}
\end{figure}

\subsection{LLM Generation Examples}
\label{apx: llm example}
We pick examples texts generated by merged model of WizardLM-13B and  WizardMath-13B,  using \texttt{ProDistill} and Task Arithmetic. The results are provided in Figure~\ref{fig: math example}~(Math), Figure~\ref{fig: code example}~(Code) and Figure~\ref{fig: instruct example}~(Instruct).

\begin{figure}[ht]
    \centering
    \begin{tikzpicture}
        \node[draw, fill=gray!10, rounded corners, thick, inner sep=10pt, minimum width=0.9\textwidth] (box) {
            \begin{minipage}{0.9\textwidth}
                \textbf{\textcolor{teal}{Prompt:}} \\
Below is an instruction that describes a task. Write a response that appropriately completes the request.

\textbf{Instruction:}
Given $\|\mathbf{v}\| = 5$ and $\|\mathbf{w}\| = 8,$ find the largest possible value of
$\|\operatorname{proj}_{\mathbf{w}} \mathbf{v}\|.$\\
                \rule{\linewidth}{0.4pt}
                
                \begin{minipage}[t]{0.49\textwidth}
                    \textbf{\textcolor{teal}{Response of \texttt{ProDistill} model:}}
                    \begin{quote}
\textbf{Response:} Let's think step by step.

Step 1: Understand the problem
We are given two vectors $\mathbf{v}$ and $\mathbf{w}$ with norms $\|\mathbf{v}\| = 5$ and $\|\mathbf{w}\| = 8$, respectively. We need to find the largest possible value of the norm of the projection of $\mathbf{v}$ onto $\mathbf{w}$.

Step 2: Define the projection
The projection of $\mathbf{v}$ onto $\mathbf{w}$ is a vector that is as close as possible to $\mathbf{w}$ while still being a scalar multiple of $\mathbf{v}$. This vector is given by $\operatorname{proj}_{\mathbf{w}} \mathbf{v} = \frac{\mathbf{v} \cdot \mathbf{w}}{\|\mathbf{w}\|^2} \mathbf{w}$.

Step 3: Calculate the dot product
The dot product of $\mathbf{v}$ and $\mathbf{w}$ is $\mathbf{v} \cdot \mathbf{w} = 5 \cdot 8 = 40$.

Step 4: Calculate the scalar multiple
The scalar multiple of $\mathbf{v}$ that is closest to $\mathbf{w}$ is given by $\frac{\mathbf{v} \cdot \mathbf{w}}{\|\mathbf{w}\|^2} \mathbf{w} = \frac{40}{8} \mathbf{w} = 5 \mathbf{w}$.

Step 5: Calculate the norm of the projection
The norm of the projection of $\mathbf{v}$ onto $\mathbf{w}$ is given by $\|\operatorname{proj}_{\mathbf{w}} \mathbf{v}\| = \|\frac{5}{8} \mathbf{w}\| = \frac{5}{8} \|\mathbf{w}\| = \frac{5}{8} \cdot 8 = 5$.

Step 6: Find the largest possible value
The largest possible value of the norm of the projection of $\mathbf{v}$ onto $\mathbf{w}$ is $\|\operatorname{proj}_{\mathbf{w}} \mathbf{v}\| = 5$.

So, the largest possible value of the norm of the projection of $\mathbf{v}$ onto $\mathbf{w}$ is $\boxed{5}$.

The answer is: 5. \textcolor{green}{\checkmark}
                    \end{quote}
                \end{minipage}
                \vrule width 0.5pt
                \hspace{0.2cm}
                \begin{minipage}[t]{0.49\textwidth}
                    \textbf{\textcolor{teal}{Response of Task Arithmetic model:}}
                    \begin{quote}
\textbf{Response:} Let's think step by step.

Step 1: Understand the problem
We are given the magnitudes of two vectors, $\mathbf{v}$ and $\mathbf{w}$, and we need to find the largest possible value of the magnitude of their projection.

Step 2: Recall the formula for projection
The projection of vector $\mathbf{v}$ onto vector $\mathbf{w}$ is given by:
\[\operatorname{proj}_{\mathbf{w}} \mathbf{v} = \frac{\mathbf{v} \cdot \mathbf{w}}{\|\mathbf{w}\|^2} \mathbf{w}\]

Step 3: Calculate the dot product
The dot product of $\mathbf{v}$ and $\mathbf{w}$ is:
\[\mathbf{v} \cdot \mathbf{w} = \|\mathbf{v}\| \|\mathbf{w}\| \cos \theta\]
where $\theta$ is the angle between the two vectors.

Step 4: Substitute the given magnitudes
We know that $\|\mathbf{v}\| = 5$ and $\|\mathbf{w}\| = 8$. Substituting these values into the dot product equation, we get:
\[\mathbf{v} \cdot \mathbf{w} = 5 \cdot 8 \cdot \cos \theta = 40 \cos \theta\]

Step 5: Find the maximum value of the dot product
The maximum value of the dot product occurs when the angle $\theta$ between the two vectors is $0$, \emph{i.e.}, the vectors are aligned. In this case, the dot product is:
\[\mathbf{v} \cdot \mathbf{w} = 40\]

Step 6: Calculate the projection
Now, we can find the magnitude of the projection:
\[\|\operatorname{proj}_{\mathbf{w}} \mathbf{v}\| = \frac{40}{8^2} = \frac{10}{8} = \frac{5}{4}\]

Step 7: State the final answer
Therefore, the largest possible value of $\|\operatorname{proj}_{\mathbf{w}} \mathbf{v}\|$ is $\frac{5}{4}$.

The answer is: $\frac{5}{4}$. \textcolor{red}{\ding{55}}
                    \end{quote}
                \end{minipage}
            \end{minipage}
        };
    \end{tikzpicture}
    \label{fig: math example}
    \caption{\textbf{Comparison of LLM generation results on the MATH benchmark.}}
\end{figure}

\begin{figure}[ht]
    \centering
    \begin{tikzpicture}
        \node[draw, fill=gray!10, rounded corners, thick, inner sep=10pt, minimum width=0.9\textwidth] (box) {
            \begin{minipage}{0.9\textwidth}
                \textbf{\textcolor{teal}{Prompt:}} \\
Below is an instruction that describes a task. Write a response that appropriately completes the request.

\textbf{Instruction:}
Write a Python function to find the quotient of two numbers.

\textbf{Test examples:}
\begin{verbatim}
assert find(10, 3) == 3
assert find(4, 2) == 2
assert find(20, 5) == 4
\end{verbatim}
                \rule{\linewidth}{0.4pt}
                
                \begin{minipage}[t]{0.49\textwidth}
                    \textbf{\textcolor{teal}{Response of \texttt{ProDistill} model:}}
                    \begin{quote}
\textbf{Response:} \\
Here's a Python function to find the quotient of two numbers:

\begin{verbatim}
def find(dividend, divisor):
    return dividend // divisor
\end{verbatim}

Now, let's test the function with the given examples:

\begin{verbatim}
assert find(10,3) == 3
assert find(4,2) == 2
assert find(20,5) == 4
\end{verbatim}

All the assertions are True, which means the function is working correctly.\textcolor{green}{\checkmark}

                    \end{quote}
                \end{minipage}
                \vrule width 0.5pt
                \hspace{0.2cm}
                \begin{minipage}[t]{0.49\textwidth}
                    \textbf{\textcolor{teal}{Response of Task Arithmetic model:}}
                    \begin{quote}
\textbf{Response:} \\
To find the quotient of two numbers, we can use the division operator (\texttt{/}) in Python. Here's a simple function that takes two numbers as input and returns their quotient:

\begin{verbatim}
def find(dividend, divisor):
    return dividend / divisor
\end{verbatim}

Now, let's test the function with the given examples:

\begin{verbatim}
assert find(10,3) == 3
assert find(4,2) == 2
assert find(20,5) == 4
\end{verbatim}

All three tests pass, which means the function is working correctly. The output of the function for each test is:

\begin{verbatim}
3
2
4
\end{verbatim}

So, the quotient of 10 and 3 is 3, the quotient of 4 and 2 is 2, and the quotient of 20 and 5 is 4.
 \textcolor{red}{\ding{55}}
                    \end{quote}
                \end{minipage}
            \end{minipage}
        };
    \end{tikzpicture}
    \label{fig: code example}
    \caption{\textbf{Comparison of LLM generation results on the MBPP benchmark.}}
\end{figure}

\begin{figure}[ht]
    \centering
    \begin{tikzpicture}
        \node[draw, fill=gray!10, rounded corners, thick, inner sep=10pt, minimum width=0.9\textwidth] (box) {
            \begin{minipage}{0.9\textwidth}
                \textbf{\textcolor{teal}{Prompt:}} \\
A chat between a curious user and an artificial intelligence assistant. The assistant gives helpful, detailed, and polite answers to the user's questions.

\textbf{USER:} Who created the Superman cartoon character?\\
                \rule{\linewidth}{0.4pt}
                
                \begin{minipage}[t]{0.49\textwidth}
                    \textbf{\textcolor{teal}{Response of \texttt{ProDistill} model:}}
                    \begin{quote}
\textbf{ASSISTANT:} The Superman cartoon character was created by two American comic book writers, Jerry Siegel and Joe Shuster. They first introduced the character in \textit{Action Comics \#1} in 1938. The character was originally called ``The Superman'' and was later shortened to just ``Superman.'' The character has since become one of the most iconic and recognizable superheroes in the world, with numerous adaptations in various media, including comic books, television shows, movies, and video games.

                    \end{quote}
                \end{minipage}
                \vrule width 0.5pt
                \hspace{0.2cm}
                \begin{minipage}[t]{0.49\textwidth}
                    \textbf{\textcolor{teal}{Response of Task Arithmetic model:}}
                    \begin{quote}
\textbf{ASSISTANT:} The Superman cartoon character was created by two American comic book artists, Jerry Siegel and Joe Shuster. They first introduced the character in \textit{Action Comics \#1} in 1938. The character has since become one of the most iconic and recognizable superheroes in the world, with numerous adaptations in various media, including comics, television, film, and merchandise.

                    \end{quote}
                \end{minipage}
            \end{minipage}
        };
    \end{tikzpicture}
    \label{fig: instruct example}
    \caption{\textbf{Comparison of LLM generation results on the AlpacaEval 2.0 benchmark.}}
\end{figure}

\end{document}